\def\eqref#1{equation~\ref{#1}}
\def\1{\bm{1}}
\DeclareMathAlphabet{\mathsfit}{\encodingdefault}{\sfdefault}{m}{sl}
\SetMathAlphabet{\mathsfit}{bold}{\encodingdefault}{\sfdefault}{bx}{n}
\lstdefinestyle{customCode}{
    backgroundcolor=\color{gray!10},   % Light gray background
    basicstyle=\ttfamily\small,        % Monospaced font with smaller size
    keywordstyle=\color{blue}\bfseries, % Keywords in bold blue
    commentstyle=\color{gray}\itshape, % Comments in italic gray
    stringstyle=\color{teal},          % Strings in teal
    numberstyle=\tiny\color{gray},     % Line numbers in tiny gray font
    numbers=left,                      % Line numbers on the left
    stepnumber=1,                      % Line numbers for every line
    numbersep=10pt,                    % Space between line numbers and code
    frame=single,                      % Adds a border around the code
    rulecolor=\color{black},           % Frame color
    tabsize=4,                         % Tab size
    showspaces=false,                  % Hide space markers
    showstringspaces=false,            % Hide string space markers
    breaklines=true,                   % Enable line breaking
    breakatwhitespace=true,            % Break at whitespace
    captionpos=b,                      % Caption at the bottom
}
\newtheorem{thm}{Theorem}
\newcommand{\map}[3]{#1 \colon #2 \rightarrow #3}
\newcommand{\mcS}{\mathcal{S}}
\newcommand{\mcA}{\mathcal{A}}
\newcommand{\mcG}{\mathcal{G}}
\newcommand{\Sgoal}{\mcS_{\text{goal}}}
\newcommand{\Sunsafe}{\mcS_{\text{unsafe}}}
\newcommand{\Sabstract}{\mcS_{\text{abstract}}}
\newcommand{\Sfull}{\mcS_{\text{full}}}
\newcommand{\Aabstract}{\mcA_{\text{abstract}}}
\newcommand{\xabstract}{x_{\text{abstract}}}
\newcommand{\xfull}{x_{\text{full}}}
\title{ 
LogicGuard: Improving embodied LLM agents through temporal logic based critics

%Temporal Logic-Guided Actor-Critics for Safe and Efficient Embodied LLM Agents
}
\author{Anand Gokhale$^{1}$, \quad 
Vaibhav Srivastava$^{2}$,\quad Francesco Bullo$^{1}$ \\
{\small $^{1}$Department of Mechanical Engineering, University of California at Santa Barbara} \\
    {\small $^{2}$Department of Electrical and Computer Engineering, Michigan State University} \\
{\tt \small anand\_gokhale@ucsb.edu, vaibhav@egr.msu.edu, bullo@ucsb.edu}
}
\begin{document}

\maketitle

\begin{abstract}
Large language models (LLMs) have shown promise in zero-shot and single step reasoning and decision-making problems, but in long-horizon sequential planning tasks, their errors compound, often leading to unreliable or inefficient behavior. We introduce LogicGuard, a modular actor–critic architecture in which an LLM actor is guided by a trajectory-level LLM critic that communicates through Linear Temporal Logic (LTL). Our setup combines the reasoning strengths of language models with the guarantees of formal logic. The actor selects high-level actions from natural language observations, while the critic analyzes full trajectories and proposes new LTL constraints that shield the actor from future unsafe or inefficient behavior. LogicGuard supports both fixed safety rules and adaptive, learned constraints, and is model-agnostic: any LLM-based planner can serve as the actor, with LogicGuard acting as a logic-generating wrapper. We formalize planning as graph traversal under symbolic constraints, allowing LogicGuard to analyze failed or suboptimal trajectories and generate new temporal logic rules that improve future behavior. To demonstrate generality, we evaluate LogicGuard across two distinct settings: short-horizon general tasks and long-horizon specialist tasks. On the Behavior benchmark of 100 household tasks, LogicGuard increases task completion rates by 25\% over a baseline InnerMonologue planner. On the Minecraft diamond-mining task, which is long-horizon and requires multiple interdependent subgoals, LogicGuard improves both efficiency and safety compared to SayCan and InnerMonologue. These results show that enabling LLMs to supervise each other through temporal logic yields more reliable, efficient and safe decision-making for both embodied agents.
\end{abstract}

\section{Introduction}

Large Language Models (LLMs) have recently demonstrated strong performance on diverse reasoning and decision-making tasks, from natural language understanding, reasoning~\citep{JH-KCC:22, JW-XW-DS-MB-FX-EC-QVL-DZ:22}, and code generation~\citep{YL-DC-JC-ADL:22, MC-JT-HJ-GB:21}. However, much of this success has been in static, text-based settings. In contrast, embodied dynamical environments require agents to plan over long horizons under uncertainty, partial observability, and complex dynamics. While LLMs can generate short-term plans or respond coherently to individual prompts, they lack the consistency, memory, and iterative refinement needed to solve multi-step tasks where intermediate actions must align with long-term goals. These shortcomings are especially pronounced in open-ended domains such as robotics and interactive virtual environments, where agents must reason over large action spaces, tools, and environment dynamics.

Recent evaluations~\citep{SK-KV-LG-MV-KS-SB-LS-ABM:24} show that even in simplified settings, LLMs often fail to produce reliable plans without external verification. Small prompt variations lead to compounding errors, and generated plans frequently violate preconditions or logical constraints. To address these weaknesses, hybrid frameworks pair LLMs with external verifiers~\citep{TS-VH-RSS-NK-TLP-LPK:22, XG-DK-US-LQ-HZ-GD-PS-BH:24}. Yet, these methods typically rely on manual design and labeling, limiting scalability. In this work, we aim to reduce such manual effort by automatically generating formal constraints that guide and safeguard LLM planning.

Embodied environments such as Minecraft~\citep{LF-GW-YJ-AM-AA:22, GW-YX-YJ-AM-CX-YZ-LF-AA:23}, iGibson~\citep{CL-FX-RMM-ML-SS-BS-KEV-CG-GD-TJ-AK-KL-HG-JW-LFF-SS:22}, and VirtualHome~\citep{XP-KR-MB-JL-TW-SF-AT:18} serve as useful testbeds for developing such architectures, while datasets like Behavior~\citep{CL-RZ-JW-CG-SS-RMM-CW-GL-ML-JS:23} benchmark agent performance across diverse tasks. These domains capture core challenges of embodied intelligence, perceiving high-dimensional states, planning under sparse supervision, and executing long-horizon strategies. Recent work has begun to apply LLMs in these settings, from household robotics~\citep{MA-AB-NB-YC-OC-BD-CF-CF-KG-KH:22} to resource-driven virtual worlds~\citep{GW-YX-YJ-AM-CX-YZ-LF-AA:23}, but scalability and reliability remain open problems.

Ultimately, enabling LLMs to function as trustworthy autonomous agents requires robustness in safety-critical, long-horizon, and multi-agent contexts such as healthcare~\citep{AH-CP-JQ-LHS-HJWLA:18}, automated transportation~\citep{YW-RJ-SSZ-CL-CH-ZW-ZY-QZ:23}, and domestic assistance~\citep{RB-NMR-EHN-JB-MZ:25}. In these domains, unsafe or inconsistent behavior risks physical failures, while inefficiency erodes human trust~\citep{CE-LPR:23}. We argue that planning architectures must combine the flexible reasoning of LLMs with the formal guarantees of symbolic logic. To this end, we propose a symbolic actor-critic framework that uses temporal logic to enforce safety, improve performance, and ensure interpretable decision-making in embodied environments.

\subsection{Related works}

\paragraph{Temporal Logic for Planning and Reinforcement learning}
Linear Temporal Logic (LTL)~\citep{AP:77} is a formal language for specifying temporal properties of systems through boolean logic based expressions. LTL has been widely used in robot motion planning~\citep{GEF-HKG-GJP:05}, for safe planning and control~\citep{TW-UT-RMM:12}, and even in reinforcement learning~\citep{MA-RB-RE-BK-SN-UT:18}. In each of these applications, LTL is used to specify safety constraints; instead, we shall use it to specify performance constraints.

\paragraph{Language Models for Planning and Policy Learning}
Recent work has explored LLMs for short-horizon planning~\citep{WH-PA-DP-IM:22}. SayCan~\citep{MA-AB-NB-YC-OC-BD-CF-CF-KG-KH:22} scores actions with an LLM and weights them by an affordance function, while InnerMonologue~\citep{WH-FX-TX:22} feeds LLM feedback back into itself to enable online re-planning. In embodied environments,~\cite{FZ-TC-LS-LB:25} uses compositional planning for LLMs, while~\cite{ZW-ZW-XX-JL-HY:23} uses alignment to improve reasoning in embodied environments. ~\cite{YC-JA-CD-YZ-NR-CF:24} breaks down complicated tasks into subgoals. Other works such as~\citep{SK-KV-LG-MV-KS-SB-LS-ABM:24} suggest that LLMs do not show reliability while operating autonomously, and require external critics. Our work builds directly upon these ideas, augmenting LLMs with formal verifiers. 

\paragraph{Integrating Symbolic Reasoning with LLMs}
Recent work at the intersection of LLMs and temporal logic focuses on translating natural language into LTL constraints~\citep{YC-RG-YZ-CF:23, JXL-ST-AS:23}, or enforcing such translations during execution to filter unsafe actions~\citep{ZY-SSR-AS-ST:24}. These methods use static handcrafted rules. Recent work~\citep{ZR-AR-VK-GJP-HH:25} uses LLMs to generate safety constraints online for robotics, their focus is purely on avoiding unsafe behaviors. In contrast, we leverage an LLM-based LTL law generator not only to ensure safety but also to actively improve planning performance and long-horizon task performance, enabling adaptive, empirically grounded constraint generation in complex sequential environments.

\paragraph{LLM-guided Actor-Critic Architectures}
Prior literature explores hybrid actor-critic setups where LLMs guide planning and evaluation, using natural language feedback or prompt optimization loops~\citep{YD-KL-XJ-ZJ-GL:23, RY-FY-JL-SY-YZ-ZT-XL-DY:25}. These models rely on unstructured feedback and lack formal guarantees, particularly in multi-step embodied reasoning tasks. Our LTL-based trajectory critic offers interpretable, symbolic evaluation enforcing both safety and performance constraints.

\subsection{Contributions}

We propose LogicGuard,  a novel Temporal Logic-based critic, which augments and boosts the performance of existing off-the-shelf LLM planners by protecting them against unsafe and inefficient actions. Composing LogicGuard with an LLM planner leads to a novel symbolic actor-critic architecture designed to solve long-horizon planning tasks in dynamic, embodied environments using large language models (LLMs). This architecture breaks sequential planning into two timescales; an online actor proposes high-level actions based on current state descriptions, and an offline critic loop that imposes symbolic performance and safety constraints learned from past trajectories. This modular decomposition leverages LLMs’ strengths in local reasoning and addresses their weaknesses in long-term consistency.

LogicGuard is domain-agnostic, naturally integrates with existing LLM-based planners, and is the first to use symbolic temporal logic as a communication protocol between the actor and critic, enabling interpretable, verifiable, and generalizable decision making. Our contributions are threefold.

\begin{enumerate}
    \item 
    \textbf{Symbolic actor-critic architecture}: We propose a two-timescale architecture where an LLM actor generates high-level actions online, and an LLM critic periodically analyzes trajectories offline to induce Linear Temporal Logic (LTL) constraints on the actor. These constraints are meant to prune unsafe or inefficient decisions, improving task performance. Our formulation allows constraints to be automatically discovered and continuously refined over time.
    \item \textbf{Communication via temporal logic}: We introduce a novel mechanism for actor-critic interaction based on symbolic temporal logic. In contrast to traditional reinforcement learning critics or natural language feedback, our critic outputs verifiable, machine-checkable LTL constraints. LTL constraints provide strong safety guarantees, enforces logical consistency, and enables constraint reuse across similar states and tasks. We also present a graph-theoretic abstraction of planning under temporal logic, which highlights the role of constraint pruning in reducing planning complexity.
    
    \item \textbf{Empirical validation in generalist and specialist embodied settings:} We demonstrate gains in (i) completion rates on 100 short-horizon household tasks from the Behavior dataset, where atomic propositions are automatically derived across diverse environments, and (ii) efficiency and consistency in the long-horizon Minecraft diamond-mining task, where hand-engineered propositions capture domain structure. This demonstrates that our architecture benefits both generalist agents operating in varied environments and specialist agents in structured long-horizon domains.  
    
\end{enumerate}

Together, these contributions advance the goal of robust, safe, and interpretable LLM-based decision making in complex, dynamic environments through self-supervision, bridging the gap between natural language reasoning and formal planning.

\section{Background and Problem Formulation}
A key challenge in deploying LLM-based agents in embodied environments is their difficulty with coherent long-horizon planning. Without explicit goal formalization or structured guidance, they often act reactively or inconsistently, leading to unsafe and inefficient behavior that undermines reliability in human-agent teams~\citep{XF-SO-MM-JY-HC-LS-MRE:08}. For real-world collaboration, agents must not only avoid unsafe actions but also demonstrate efficiency, competence, and interpretable reasoning.

We address this by developing a mechanism that provides formal guarantees on agent behavior while allowing for self-correction and improvement from structured feedback. Our framework uses symbolic constraints learned over time to guide LLM decision-making, ensuring safe and efficient behavior with limited data. We evaluate this approach across two settings: (i) short-horizon household tasks in the Behavior dataset, and (ii) the long-horizon challenge of mining a diamond in Minecraft.

\subsection{Problem Statement}
We formalize the problem of safe and efficient LLM planning in embodied environments. Consider an agent operating in a state space $\mcS$ and a finite high-level action space $\Aabstract$, where $|\Aabstract| = m$. The agent observes a representation of the state, which is modeled by $\map{\phi_{\text{full}}}{\mcS}{\Sfull}$. Unlike standard RL, we assume no explicit reward function,
and instead rely on a natural language goal description (e.g., ``make an iron pickaxe" or ``obtain a diamond").

To enable formal reasoning, we introduce an abstract Boolean representation $\map{\phi_{\text{abstract}}}{\Sfull}{\Sabstract}$, where $ {\Sabstract}\subseteq \{0,1\}^n$, encodes $n$ atomic propositions capturing salient environment features. Within $\Sabstract$, we define $\Sgoal$, the set of states that satisfy task objectives, and $\Sunsafe$, the set of states that violate safety requirements.

Behavior is regulated through two types of LTL constraints. Safety constraints are expert-authored rules that forbid transitions into $\Sunsafe$, such as collision avoidance. Performance constraints, in contrast, are automatically induced by LogicGuard from observed trajectories, eliminating redundant or suboptimal action patterns. Assuming all high-level actions have equal cost, the agent’s objective is to minimize the number of primitive actions required to reach $\Sgoal$ while satisfying all safety constraints with high probability.

% Within $\Sabstract$, we focus on $\Sgoal$, the states which satisfy the tasks objectives, and $\Sunsafe$, the set of states that violate safety constraints. Behavioral constraints are enforced via LTL formulas, of two types.
% \begin{itemize}
%     \item \textbf{Safety Constraints} are expert-verified, hand-authored LTL constraints that prevent transitions into $\Sunsafe$. These capture fundamental requirements such as collision avoidance.
%     \item \textbf{Efficiency constraints} are automatically generated by LogicGuard and eliminate suboptimal behaviors observed during execution, emerging from patterns such as redundant action sequences. We assume all high-level actions have equal cost and aim to minimize the number of primitives to reach the goal.
% \end{itemize}

% We seek to develop an agent that satisfies all safety constraints with high probability and efficiently performs a sequence of actions that leads to goal states. 

\subsection{Linear Temporal Logic Primer}
Linear Temporal Logic(LTL)~\citep{AP:77} provides a formal language for expressing temporal properties over sequences of states. LTL formulas  consist of variables called atomic propositions, Boolean operators, and temporal operators(discussed in Appendix \ref{appendix:LTL}). LTL formulas can be converted into B{\"u}chi automata, which are finite-state machines enabling the algorithmic verification of whether a trajectory satisfies a given temporal specification. Using atomic propositions corresponding to physically meaningful events leads to interpretability.  We employ SPOT~\citep{ADL-DP:04} to enforce LTL laws on the LLM actor.

\subsection{Experimental Domains}~\label{sec:experimental_domains}

We evaluate LogicGuard in two contrasting embodied environments. Behavior~\citep{CL-RZ-JW-CG-SS-RMM-CW-GL-ML-JS:23} consists of short-horizon household tasks in diverse environments, with multiple independent subtasks. In contrast, Minecraft presents long-horizon compositional tasks with interdependent subgoals; we study the diamond-mining task~\citep{WHG-BH-T-PW-CC-MV-RS:19}. 
These domains allow us to evaluate LogicGuard on both generalist agents in diverse, short-horizon tasks and specialist agents in structured, long-horizon tasks.

% \subsection{Minecraft}
% Minecraft has emerged as a popular domain for studying embodied agents for long-horizon reasoning and open-ended planning. It is a partially observable dynamical environment, with a very large state space and action space. Its compositional task structure makes it well-suited to be a benchmark for complex planning problems. In our experiments, we aim to solve the problem of obtaining a diamond in Minecraft, inspired by~\citep{WHG-BH-T-PW-CC-MV-RS:19}. Mining a diamond requires the completion of a series of subgoals, with no immediate reward signal. More recent studies incorporate language and vision, such as MineDojo~\cite{LF-GW-YJ-AM-AA:22}, which utilizes video and text data on the internet with a programmable agent API. Another line of work~\citep{GW-YX-YJ-AM-CX-YZ-LF-AA:23} discusses the use of LLMs to design lifelong learning agents based on environment feedback and skill reuse. However, none of these methods presents guarantees on safety. We interface with Minecraft via the Mineflayer API~\citep{mineflayer2025}, which provides structured local observations and built-in path-planning utilities. The agent interacts with the world using a fixed set of primitive commands. Experiments are conducted in different worlds with random seeds to account for robustness to the map.

\section{Methods}
\begin{figure}
    \centering
    \includegraphics[width=\linewidth]{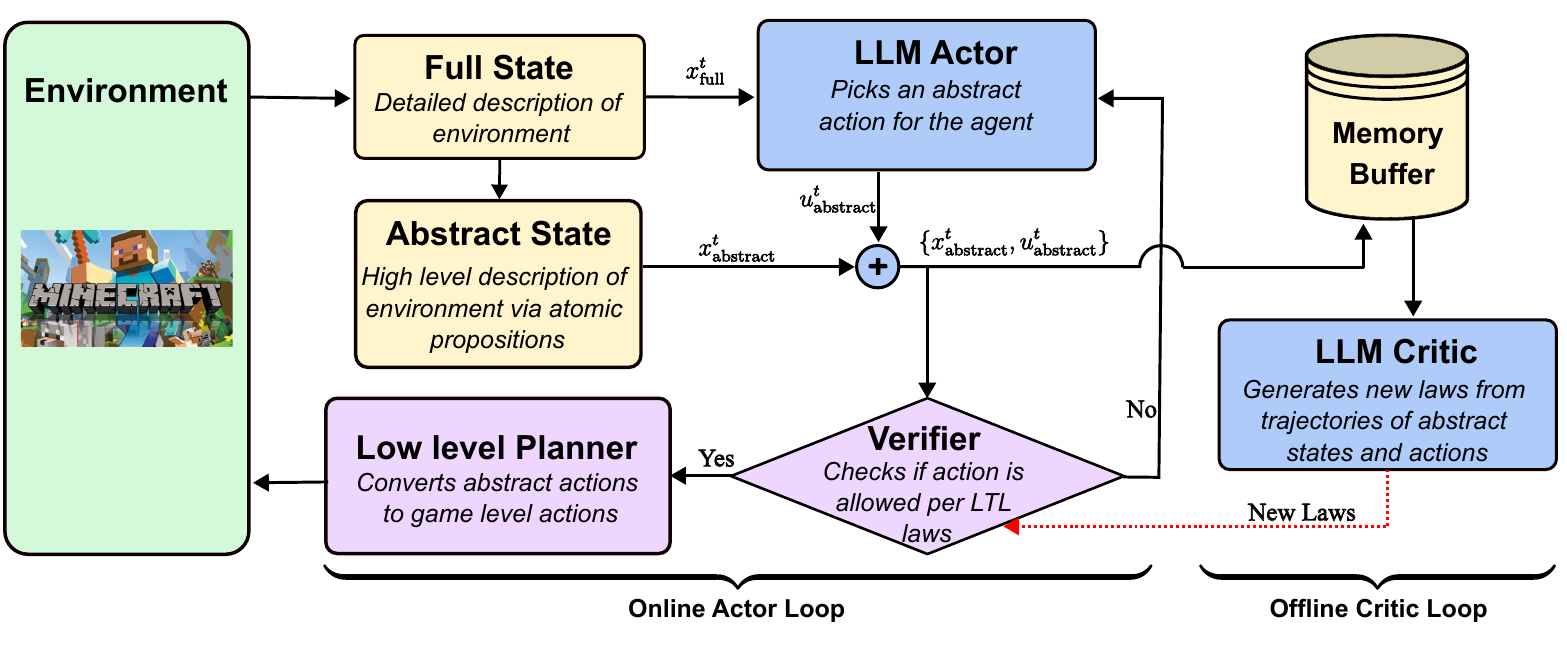}
    \caption{\textbf{LTL-guided actor-critic architecture:} Online, the LLM actor selects a high-level action, which a symbolic verifier checks against LTL safety and efficiency constraints. Valid actions are executed, while invalid actions trigger replanning. Offline, LogicGuard reviews trajectories and proposes new LTL constraints to improve long-term behavior.}
    \label{fig:architecture_overview}
\end{figure}
\subsection{Architecture Overview}

\begin{figure}[h]
    \centering
    \includegraphics[width=\linewidth]{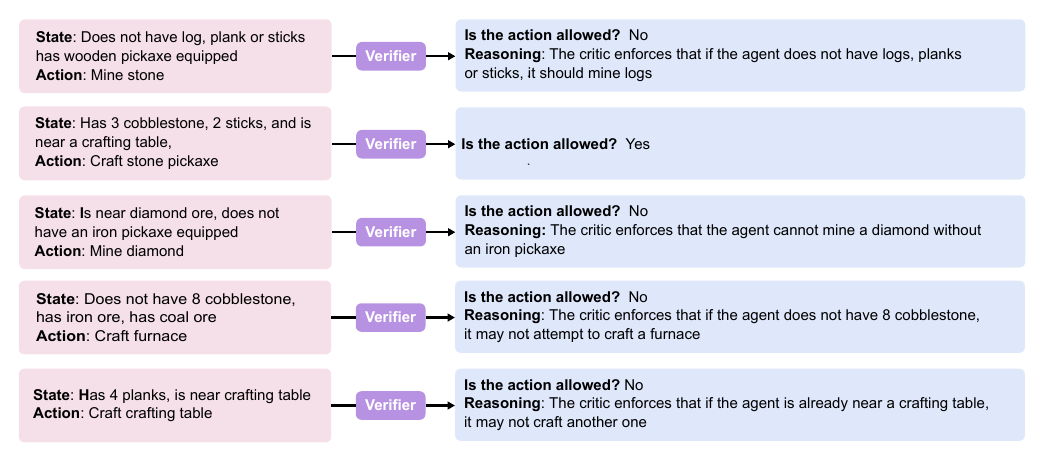}
    \caption{
\textbf{Examples of the operation of the LTL-based verifier in a Minecraft Environment:} Each abstract state-action pair is checked against a B{\"u}chi automaton encoding existing LTL constraints. If the action violates any constraint, the verifier provides feedback identifying the violated rule, and the actor is prompted to replan.}
    \label{fig:verifier}
\end{figure}

We propose a hierarchical planning architecture that integrates large language models (LLMs) with formal symbolic reasoning to achieve safe and efficient decision-making in complex environments. The architecture consists of two interacting loops operating at different timescales:

\paragraph{Online actor loop:} At each timestep, an LLM actor is given a natural language state description, $\xfull = \phi_{\text{full}}(s) \in \Sfull$ and chooses a high-level action $a \in \Aabstract$. The action is verified against current LTL constraints, defined over $\Sabstract$. If valid, it is executed via a low level controller. Otherwise, the actor is informed that the chosen action is invalid and prompted to choose a new action. Examples of the LTL verifier in action are shown in Figure~\ref{fig:verifier}.

\paragraph{Offline critic loop:} Periodically, an LLM-based critic analyzes completed trajectories to identify incorrect, inefficient or unsafe behaviors, proposing new LTL constraints or removing existing ones. Updates are immediately incorporated into the verifier, affecting subsequent actions.

The separation of online reactive planning and offline symbolic refinement enables safe, efficient and interpretable decision-making. The modular design also supports easy transfer across domains. A full architectural diagram is presented in Figure~\ref{fig:architecture_overview}.

\subsection{Defining the atomic propositions}

The choice of variables in $\Sabstract$ directly affects the critic’s expressive power and tractability. For generalist environments like Behavior, we automate the design of $\Sabstract$, initializing it with variables needed for goal satisfaction and safety constraints, and then augmenting it based on observed state changes during exploratory rollouts. This ensures the abstract state is both task-aware and environment-adaptive while avoiding excessive manual design (details in Appendix~\ref{appendix:abstract_state}).

\subsection{Planning as a Graph Traversal Problem}\label{subsec:task_graph}

We frame efficient planning as a shortest path problem under safety constraints. Each primitive action has unit cost, as LLM calls dominate execution time. The agent aims to reach a state in $\Sgoal$ from its current state in as few steps as possible, while avoiding $\Sunsafe$. 

We model the problem as a bipartite graph $\mcG$. One partition of this graph consists of symbolic states $\Sabstract$, while the other partition consists of $\Sabstract \times \Aabstract$. Edges from $s \in \Sabstract$ to $(s, a)$ exist only if $a$ is allowed by current LTL constraints; edges from $(s, a)$ to $s'$ represent state transitions. This bipartite structure explicitly separates the roles of the actor, which selects edges from states to state-action pairs, and the critic, which prunes edges via constraints.

Due to an exponential number of nodes, finding the shortest path in this graph is exponentially hard, motivating the need for LLMs to guide exploration via natural language reasoning, efficiently navigating the graph despite its combinatorial complexity.

% \paragraph{1. Semi-MDP View:} We may treat $\Sabstract$ as the state space of a Markov Decision Process(MDP) with unknown dynamics. For an agent to trade off between exploring to learn transition probabilities and an efficient policy. 
%\paragraph{2. Bipartite graph View:} 

% Modeling LTL laws in the Semi-MDP framework involves constructing a lifted product graph with additional nodes. We define a bipartite graph $\mcG$, whose nodes comprise two disjoint sets of symbolic states $\Sabstract$ and state-action pairs $\Sabstract \times \Aabstract$. Edges from $s \in \Sabstract$ to $(s, a)$ are allowed only if $a$ is permitted by current LTL constraints in state $s$. Edges from $(s, a)$ to $s' \in \Sabstract$ represent the state transition  function given a fixed action. The bipartite structure makes explicit the roles of the actor (choosing edges from $\Sabstract$ to $\Sabstract \times \Aabstract$) and the critic (pruning these edges via constraints).

% Planning can thus be reduced to a path planning problem on a graph with $O(2^n)$ nodes, and an unknown structure. As a consequence, solving the shortest path planning problem optimally is also at least $\Omega(2^n)$. We resort to using LLMs for natural language reasoning-driven guided exploration to overcome the exponential complexity.

\subsection{LLM Actor: Guided Exploration}

The LLM actor receives a full state description $\xfull$ and proposes a high-level action from the action space $\mcA$ such as ``mine\_stone" or ``grasp\_plywood". In the bipartite graph $\mcG$, the actor chooses an edge flowing out from the agents current state $\xabstract$, which is legal as per the current LTL constraints. The use of the LLM allows us to replace brute force exploration with semantically guided traversal in a large symbolic space. Our architecture allows the direct use of existing LLM planners. We adopt InnerMonologue~\citep{WH-FX-TX:22} for Behavior, and both SayCan~\citep{MA-AB-NB-YC-OC-BD-CF-CF-KG-KH:22} and InnerMonologue for Minecraft. To ensure adaptivity and prevent overly restrictive rules, the actor tracks repeated attempts of violations of LTL constraints; if a particular rule is triggered beyond a fixed threshold, it is removed, allowing the agent to explore alternative strategies.

\subsection{LogicGuard: the Linear Temporal Logic-based LLM Critic}
\begin{figure}[h]
    \centering
    \includegraphics[width=\linewidth]{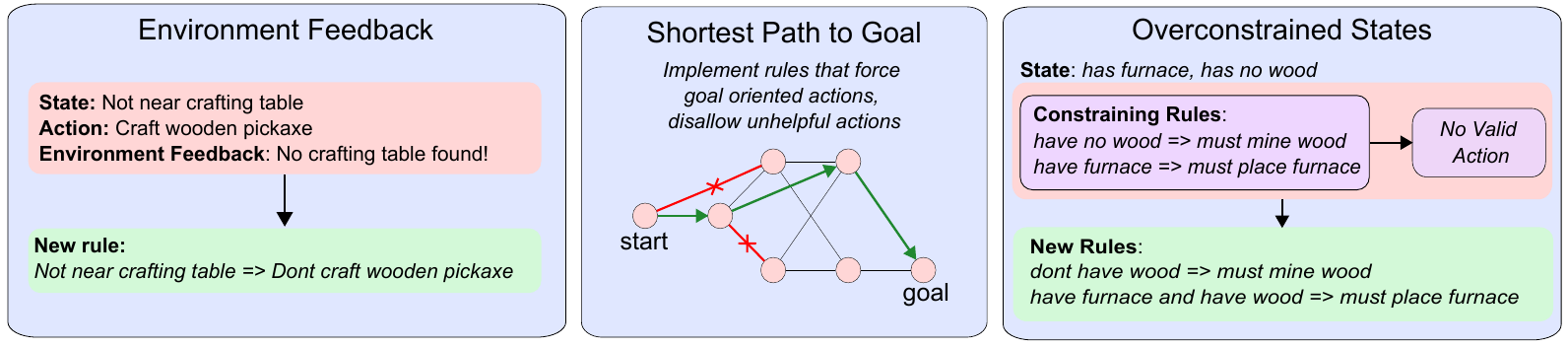}
    \caption{\textbf{Sources of LogicGuard generated laws:} LogicGuard generates new LTL laws by observing complete trajectories. Particularly, it generates laws based on three sources: environment feedback, graph-based efficiency improvements, and contradiction detection in over-constrained states.}
    \label{fig:critic_explained}
\end{figure}

In our experiments, we choose to run the critic to analyze complete trajectories. In practice the critic may be run more or less often depending on the requirements of the specific application. The critic is prompted to identify inefficient behaviors and to propose new LTL constraints of the form 
\begin{equation}
    G(\phi_s \implies X(\phi_a)),
    \label{eq:law_form}
\end{equation}
where $\phi_s$ is a boolean expression over symbolic state features, and $\phi_a$ specifies allowed or disallowed actions. These constraints eliminate inefficient behaviors. For example, 
\begin{equation}
    {\text{G(agent\_has\_wooden\_pickaxe} \implies \text{X(!action\_craft\_wooden\_pickaxe))}}, 
    \label{eq:example_constraint}
\end{equation}
prevents crafting duplicate wooden pickaxes. Constraints are only generated if $\phi_s$ is observed in the trajectory, ensuring generalization grounded in data. Constraints are induced from three sources:

\begin{enumerate}
    \item \textbf{Environment feedback}: Actions deemed invalid by the environment (e.g. soaking a rag without turning on the tap or mining diamonds without an iron pickaxe) lead to an error message from the environment. These actions are encoded into constraints to prevent future errors.     
    % When the actor attempts an invalid action (e.g., mining diamonds without an iron pickaxe), the environment throws an error. While the actor sees this error and modifies its action immediately, the critic encodes this failure into a permanent constraint, shielding future policies from repeating the mistake.
    \item \textbf{Graph-based efficiency}: The critic is prompted to analyze the symbolic task graph from~\ref{subsec:task_graph}. The critic is asked to classify actions as efficient or inefficient, pruning wasteful actions, with an emphasis on repetitive actions.    
    % We ask the critic to classify actions as efficient and inefficient,  promoting movement towards the goal states and limiting wasteful actions. 
    \item \textbf{Overconstrained States}: When current laws collectively eliminate all feasible actions in a state, the system falls back to bare minimum hand-engineered laws. Offline, these states are analyzed to refine or relax constraints, preventing overly restrictive rules.
\end{enumerate}

Since the critic is only allowed to induce constraints grounded in trajectory data, all proposed rules are traceable, and we may bound the number of possible rules that are generated.

\begin{thm}
    Let $(s_1, a_1, \dots, s_N, a_N)$ be a trajectory with $s_i \in \Sabstract$, and each $a_i\in \{0,1\}^m$ a one-hot vector representing an action from a finite action space $\Aabstract$ of size $m$. Consider an algorithm that generates LTL constraints of the form~\eqref{eq:law_form}
    where $\phi_s$ is a boolean condition over the symbolic state that holds for at least one $s_i$ in the trajectory and $\phi_a \in \{a_i, !a_i\}$.
    
    Then, the algorithm can generate at most $N$ distinct such laws from the trajectory while ensuring that, for every state $s \in \Sabstract$, there exists at least one action $a \in \{0,1\}^m$ satisfying all LTL laws.
\end{thm}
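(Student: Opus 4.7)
The plan is to separate the two claims in the theorem—the combinatorial bound of $N$ on the number of distinct laws, and the feasibility invariant—and handle them in that order, since the counting argument already isolates the per-step structure that the feasibility argument exploits.

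For the upper bound, I would exhibit an injection $\sigma$ from the set $L$ of generated laws into the trajectory index set $\{1, \dots, N\}$. Every generated law $G(\phi_s \Rightarrow X(\phi_a))$ comes equipped by the structural hypothesis with at least one witness index $i$ such that $\phi_s$ holds at $s_i$ and $\phi_a \in \{a_i, \lnot a_i\}$; the two halves of the hypothesis jointly anchor the law to a single trajectory step. Because the algorithm produces laws by scanning the trajectory and emitting at most one law per step (the standard reading of ``generates from the trajectory''), distinct laws are anchored at distinct steps, so the assignment $\sigma(\ell) = i(\ell)$ is an injection and $|L| \le N$ follows by cardinality.

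For the feasibility invariant I would argue by induction on the number of accepted laws. Let $A_k(s) \subseteq \{0,1\}^m$ denote the admissible one-hot action vectors at symbolic state $s$ after $k$ insertions, starting from $A_0(s)$ equal to the set of $m$ one-hot vectors, and maintain the invariant $A_k(s) \neq \emptyset$ for every $s \in \Sabstract$. A proposed law with forbidding consequent $\phi_a = \lnot a_i$ removes exactly the single vector $a_i$ from $A_k(s)$ at each $s$ satisfying $\phi_s$, while a forcing consequent $\phi_a = a_i$ restricts $A_k(s)$ to $\{a_i\}$ there; in the forcing case, $a_i$ was actually executed at $s_i$ without violating earlier laws, and narrowing $\phi_s$ (for instance, to the full conjunction of atomic propositions at $s_i$) confines the law's effect to states where $a_i$ is already admissible. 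The overconstraint check outlined in the paragraph preceding the theorem rejects any proposed law that would empty some $A_k(s)$, so the invariant is preserved at every step.

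The main obstacle I expect is ruling out pathological interactions between multiple forcing-form laws whose $\phi_s$-supports overlap, since two such laws demanding different forced actions would empty the admissible set on the intersection. I would dispose of this by appealing to the critic's overconstrained-state rejection mechanism, which aborts any law insertion that violates the invariant; crucially, since the injection $\sigma$ counts only accepted laws, rejected laws do not inflate $|L|$, and the bound of $N$ is unaffected by this rejection layer. The remaining steps—verifying that the inductive invariant composes correctly with the injection and that distinctness is measured up to logical equivalence of $(\phi_s, \phi_a)$ pairs so that duplicate emissions collapse—are routine.
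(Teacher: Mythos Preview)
Your injection argument has a genuine gap. You assert that the algorithm ``emits at most one law per step'' and call this ``the standard reading of `generates from the trajectory','' but nothing in the theorem statement supports that reading. A single trajectory index $i$ can witness two structurally distinct laws simultaneously---one with consequent $\phi_a = a_i$ and one with $\phi_a = \lnot a_i$---so your map $\sigma$ is not an injection without an additional hypothesis you have not justified. By decoupling the counting bound from the feasibility requirement, you lose exactly the leverage needed to make the injection well-defined.

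The paper's proof proceeds differently. It first bounds the number of distinct antecedents $\phi_s$ by $N$ (one per observed state), and for each antecedent notes the two possible consequents $\{a_i, \lnot a_i\}$, yielding $2N$ candidate laws. The reduction from $2N$ to $N$ then comes \emph{directly from the feasibility requirement}: for a fixed $\phi_s$, the consequents $X(a_i)$ and $X(\lnot a_i)$ are jointly unsatisfiable, so retaining both would leave no admissible action at any state where $\phi_s$ holds. Hence at most one of the pair can be kept, and $2N$ collapses to $N$. This mutual-exclusion step is the crux you bypass.

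Your separate inductive feasibility argument, with its appeal to the critic's overconstraint-rejection mechanism and to narrowing $\phi_s$, is also more machinery than the paper deploys; the paper invokes no online rejection layer, only the logical observation that forcing and forbidding the same action are contradictory. To repair your argument, replace the unsupported ``one law per step'' assumption with this $a_i$-versus-$\lnot a_i$ exclusion; your map $\sigma$ then becomes injective (each index supports at most one surviving law) and the counting bound follows.
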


\begin{proof}
    There are at most $N$ unique values for $s_i$. Each constraint is of the form $G(\phi_s \Rightarrow X(\phi_a))$, and $\phi_s$ must hold for at least one $s_i$, the number of distinct $\phi_s$ that can be constructed from the trajectory is at most $N$. Further, the algorithm can only decide if each action is allowed or disallowed. As a consequence, the total number of laws is at most $2N$. However, since actions cannot be simultaneously allowed and disallowed, the number of actions an algorithm operating under our assumptions can make such that every state has at least one feasible action is at most $N$. 
\end{proof}
In our bipartite graph representation, pruning edges based on observed trajectories reduces complexity from $O(m \cdot 2^n)$ to linear in dataset size. Interpretable atomic propositions allow the critic to generalize constraints to semantically equivalent but unseen states. Finally, the sparse structure of goal-directed task graphs allows LogicGuard produces sample-efficient and robust behavior while enforcing both safety and efficiency.

\section{Experiments}
\subsection{Experimental Setup}

Our actors use OpenAI GPT-4.1 as a backbone LLM and our critics use o3-mini. All temperatures are set to 0.1 to reduce stochasticity while still allowing exploration. As discussed in Section~\ref{sec:experimental_domains}, we evaluate LogicGuard in two embodied environments. Implementation details including prompts and APs are presented in the appendix.

\paragraph{Behavior:} We use the Behavior~\citep{CL-RZ-JW-CG-SS-RMM-CW-GL-ML-JS:23} dataset, consisting of short-horizon (average horizon: 14.6) household tasks with multiple independent subtasks. High-level actions and observations are designed via the API and goal specifications from~\cite{ML-SZ-QW-KW-YZ-SS-CG-TL-ELL-RZ:24}. Completion rate is the primary metric, as prior work shows non–chain-of-thought LLMs struggle in this domain. To support diverse tasks and environments, atomic propositions are automatically generated, enabling LogicGuard for general diverse settings. We adopt InnerMonologue as the LLM actor. Given the diversity of tasks and environments, implementing a reliable affordance function is challenging, which limits the applicability of SayCan in this setting.

\paragraph{Minecraft:} Minecraft provides a partially observable environment with compositional, interdependent subgoals. We study the diamond-mining task~\citep{WHG-BH-T-PW-CC-MV-RS:19}, a long-horizon setting (see Table~\ref{tab:efficiency}) requiring a sequence of dependent subgoals without intermediate rewards. We interface via the Mineflayer API~\citep{mineflayer2025}, which provides structured observations and path-planning utilities, upon which we design atomic actions. Evaluation metrics in this domain are efficiency (number of high-level actions required to obtain a diamond) and safety (number of failed or illegal actions). Unlike in Behavior, here we design hand-engineered atomic propositions, highlighting LogicGuard’s ability to augment specialist agents for complex compositional goals.

\subsection{Baselines}

Our modular architecture allows us to augment off-the-shelf LLM planners with our symbolic critic. We focus on two representative planners:

\paragraph{\textbf{InnerMonologue}~\citep{WH-FX-TX:22}:} An LLM planner that interleaves natural language thoughts and code actions, incorporating feedback at each step. This provides implicit reflection and sequential planning. We use InnerMonologue in both Behavior and Minecraft.

\paragraph{\textbf{SayCan}~\citep{MA-AB-NB-YC-OC-BD-CF-CF-KG-KH:22}}
A two-stage planner that filters feasible actions via affordances and ranks them for goal relevance. We only evaluate SayCan in Minecraft, since affordance functions do not scale to Behavior’s large, diverse action space. In Minecraft, SayCan is combined with LogicGuard for LTL-based affordance filtering.

In Minecraft, LogicGuard refines constraints for both InnerMonologue and SayCan until performance stabilizes (two iterations sufficed). In Behavior, LogicGuard augments InnerMonologue only, and the critic is invoked on failed tasks iteratively for up to two iterations to prevent overconstraining successful trajectories. While LogicGuard supports hand-engineered LTL constraints, all our experiments use only critic-generated laws. The only exception is the SayCan baseline in Minecraft, which requires a hand-engineered affordance function.

\subsection{Results}

\subsubsection{Behavior Benchmark: Task Completion}
Behavior~\citep{CL-RZ-JW-CG-SS-RMM-CW-GL-ML-JS:23} consists of 100 short-horizon household tasks with multiple independent subtasks (e.g., pick up objects, place items, open containers). Since tasks are short-horizon, we focus on task completion rather than efficiency. We end all trajectories after 40 actions, or if the actor chooses to declare it is done. 

\begin{table}[h]
\centering
\caption{Task completion rates on Behavior-100.}
\begin{tabular}{lcc}
\toprule
\textbf{Method} & \textbf{Completed Tasks} \\
\midrule
InnerMonologue           & 47\% \\
\textbf{InnerMonologue + LogicGuard} & \textbf{72}\% \\
\bottomrule
\end{tabular}
\label{tab:behavior}
\end{table}

\subsubsection{Minecraft: Efficiency and Safety}
In Minecraft, the agent must mine a diamond from scratch, involving long-horizon dependencies across mining and crafting subgoals. We evaluate efficiency (number of primitive actions to reach key subgoals) and safety (number of failed or unsafe actions).

\paragraph{Efficiency}
Table~\ref{tab:efficiency} shows the average number of primitives required to reach each subgoal. We note a 23\% increment in the performance of InnerMonologue in the diamond mining task. SayCan  is very easily distracted by the abstract nature of high level actions such as ``explore", and does not make meaningful progress on the task. Our architecture identifies these drawbacks and blocks exploration related actions till the right tools are available.  
\begin{table}[!htb]
\centering
\caption{Average primitive actions per subgoal (success rates in parentheses).}
\begin{tabular}{lcccc}
\toprule
\textbf{Method} & \textbf{Wooden Tool} & \textbf{Stone tool} & \textbf{Iron Tool}  & \textbf{Diamond} \\
\midrule

SayCan             & N/A (0/5) & N/A (0/5) & N/A (0/5) & N/A (0/5)  \\
\textbf{SayCan + LogicGuard}            & \textbf{12.6(5/5)} & \textbf{17.6(5/5)} & \textbf{37.8(5/5)} & \textbf{45.4(5/5)}
  \\
\midrule
InnerMonologue             & 12.2 (5/5) & 18.2 (5/5) & 43.25 (4/5) & 45.5 (4/5)  \\
\textbf{InnerMonologue + LogicGuard}            & \textbf{9.4 (5/5)} & \textbf{14.4 (5/5)} & \textbf{32.0 (5/5) } & \textbf{35.8 (5/5)}  \\
\bottomrule
\end{tabular}
\label{tab:efficiency}
\end{table}

\paragraph{Safety} 

We measure failed actions and the number of unsafe actions blocked by the critic. LogicGuard significantly reduces both failure rates and unsafe actions (Table~\ref{tab:safety}). Failures after LogicGuard are due to path planning errors within Mineshafter's API.

% We log invalid or failed actions and measure the total number of failed actions and the number of unsafe actions blocked by the LTL critic. Our critic significantly reduces failure rates. Common failure modes include attempting to mine iron or diamond blocks without appropriate tools. These are consistently filtered by our LTL constraints. We note that the number of failed actions with LogicGuard is nonzero due to a combination of two factors. First, our abstracted observation variables are not expressive enough, and second, Mineflayer's internal path planner is not perfect and returns errors sometimes. All the laws designed by LogicGuard are presented in Appendix~\ref{appendix:laws}.

\begin{table}[h]
\centering
\caption{Agent safety metrics. We report the number of failed actions and the number of unsafe actions blocked by the critic (if applicable). Lower is better.}
\begin{tabular}{lcc}
\toprule
\textbf{Method} & \textbf{Failed Actions} & \textbf{Critic-Blocked Unsafe Actions} \\
\midrule
InnerMonologue     & 23\%  & N/A \\
InnerMonologue + LogicGuard    & 4.5\%  & 15\% \\
\bottomrule
\end{tabular}
\label{tab:safety}
\end{table}

Our results highlight that LogicGuard generalizes across task structure and horizon, helping LLM actors act more reliably, safely, and efficiently.

% \subsection{Critic Ablation on Flatworld}

% To isolate the effect of our critic, we test it in a simplified flatworld environment where the terrain is uniform and resources are easier to access. Here, the actor is a naive LLM agent that selects high-level actions greedily from a prompt, with no self-reflection or memory. This setting highlights the critic’s ability to compensate for weak planning behavior.

% \begin{table}[h]
% \centering
% \caption{Flatworld: Naive actor vs. Critic-augmented actor.}
% \begin{tabular}{lccc}
% \toprule
% \textbf{Method} & \textbf{Failed Actions} & \textbf{Critic Blocked Actions} & \textbf{Steps to Diamond (Success rate)} \\
% \midrule
% Naive LLM Actor          & 59.27\% & N/A  & 100.75 (4/10) \\
% Naive + Critic (Ours)   & \textbf{3.5\%} & 26.82\% & 34.3 (10/10) \\
% \bottomrule
% \end{tabular}
% \label{tab:flatworld}
% \end{table}

% \begin{enumerate}
%     \item Number of primitives taken with critic
%     \item Number of primitives taken without criticism
%     \item Number of invalid actions without critic (Number of invalid actions with critic is 0)
%     \item Number of times the critic stopped the actor
% \end{enumerate}

\section{Discussion}
\subsection{LogicGuard mitigates systematic LLM actor failures}

Naive LLM actors often repeat actions after task completion or ignore environment feedback, leading to inefficiency and loops. In Behavior, an actor may repeatedly pick up or place already organized objects; in Minecraft, it can mine blocks beyond task requirements. These failures stem from limited reasoning and prompt overload. LogicGuard addresses these issues by detecting redundant or failed actions and blocking them based on task conditions. This enforces interpretable, verifiable constraints, breaking loops and improving both reliability and task efficiency.

% A common failure mode of naive LLM actors is action repetition after task completion. In both Behavior and Minecraft, actors often continue performing previously relevant actions instead of progressing toward remaining goals. For example, in Behavior, an actor may repeatedly pick up or place objects that have already been correctly organized, effectively getting stuck in loops. In Minecraft, the actor can repeatedly mine the same block even after acquiring a sufficient quantity for the task, wasting steps and delaying completion. These errors typically arise from limited reasoning abilities and information overload in prompts. Another failure mode is ignoring environment feedback and assuming that every action succeeds, which is exacerbated by dense prompts containing excessive information. 

% LogicGuard addresses these issues by identifying repetitive and failed actions and blocking them based on task conditions. This prevention of redundant actions breaks loops and promotes efficient task completion. By enforcing interpretable and verifiable conditions, LogicGuard systematically mitigates LLM actor failures, improving both reliability and task efficiency.

\subsection{Atomic propositions are a key design choice}

Atomic propositions (APs) define the variables the critic uses to construct LTL constraints, directly controlling rule expressivity. In the Behavior dataset, the most common failure mode for LogicGuard is because our APs are insufficiently expressive. For instance, our current automated AP generation treats each item in the environment as an independent entity, which complicates combinatorial tasks. Consider placing four plates into four boxes such that each box contains at least one plate. There are 24 feasible solutions. Encoding a single LTL formula that captures all feasible final states is highly complex for the critic, which must account for all possible permutations. By contrast, in Minecraft we hand engineered APs to precisely capture task-relevant state features. This targeted design simplifies rule generation, reduces combinatorial complexity, and enables more reliable constraint synthesis.

\subsection{Generalist vs Specialist agents}
LogicGuard improves performance across diverse domains. In Behavior, the environment is unknown and contains rules the LLM cannot anticipate (e.g., items must be inside a sink before soaking), which the critic must discover iteratively, akin to human learning. In Minecraft, extensive textual documentation allows the LLM to succeed with minimal guidance. Evaluating both domains demonstrates that LogicGuard supports generalist agents in novel settings and specialist agents in structured, long-horizon tasks.

% \subsection{Limitations}
% Critic runs offline, is costlier to run, it only criticizes based on what the actor has already done, which is a double edged sword, bad baseline agent = more iterations. AP design choice is still something I have to make, even implicitly if its automated

\section{Conclusion and Future Work}

We introduced a modular actor-critic architecture in which an LLM critic supervises an LLM actor using linear temporal logic (LTL) constraints over abstracted versions of full trajectories. The critic operates at a slower timescale and applies zero-shot reasoning to identify and correct unsafe or inefficient behavior in a few iterations. Our modular architecture enables us to use existing off-the-shelf LLM planners as actors. The use of LTL constraints guarantees shielding of the LLM from unsafe or inefficient behavior. By expressing constraints in LTL over human-readable atomic propositions, we gain a symbolic structure that is immediately enforceable and human verifiable. We demonstrate our approach in two contrasting embodied domains, achieving significant improvements over baseline off-the-shelf LLM agents.

Several directions remain for future work. First, a critic could leverage annotated expert trajectories to imitate optimal behavior. Second, in dynamic environments, incorporating mathematical models of the environment into the critic could enable generation of LTL laws that evolve over time. Such adaptive laws are particularly relevant for multi-agent coordination and human-robot teaming, where modeling other agents can inform constraint synthesis. Finally, we aim to extend these ideas to real-world robotics, where online decisions may rely on smaller, potentially unreliable models that require formal constraints. Overall, our approach demonstrates that formal logic-based constraints provide a promising path toward safe, scalable, and general-purpose LLM agents.

\clearpage

\section{Ethics Statement}
Our work focuses on improving LLM-based planners in simulated environments (Behavior and Minecraft) and does not involve human subjects or sensitive data. LLMs can exhibit unsafe or unreliable behavior, and overreliance on them in real-world robotics could be hazardous. Our research hopes to draw attention and inspire further work towards safety nets for blackbox foundational models. Our framework is intended as a first step to improve reliability and safety in LLM-driven agents.

\section{Reproducibility statement}

We provide detailed descriptions of all environments, prompts and LLM models used in our experiments. While we use OpenAI’s GPT-4.1 and o3-mini APIs, which are inherently stochastic even at low temperature.

\subsubsection*{Acknowledgments}

% This research was funded by the Office of Naval Research, MURI N00014-22-1-2813. We thank Arvind R.\ Venkatakrishnan for his assistance in the design of figures. 
We acknowledge the use of ChatGPT for assistance in improving the wording and grammar of this document.

\bibliography{iclr2025_conference, alias, FB, Main, New}
\bibliographystyle{iclr2025_conference}

\appendix
\section{Appendix}

% \anandtodo{
% Things to do:\\
% 1) Minecraft Prompts\\
% 2) Behavior prompts\\
% 4) Code for minecraft \\
% 5) Failure mode analysis
% }

\subsection{Linear Temporal Logic Operators}~\label{appendix:LTL}
Linear Temporal Logic(LTL)~\citep{AP:77} provides a formal language for expressing temporal properties over sequences of states. LTL formulas are built using the usual Boolean operators, with four temporal operators .
\begin{enumerate}
    \item $X(\psi)$: $\psi$ holds at the next timestep.
    \item $F(\psi)$: eventually $\psi$ holds.
    \item $G(\psi)$: $\psi$ holds in all future states.
    \item $\psi_1 U \psi_2$ means that $\psi_1$ holds until $\psi_2$ becomes true
\end{enumerate}

\subsection{Implementation details: Behavior}
\subsubsection{Automation of atomic predicate generation}\label{appendix:abstract_state}
As discussed in our main paper, the choice of atomic predicates is crucial to the success of the critic. For the Behavior dataset we automate this process. First, we design a minimal set of APs to describe the goal state manually. Then, for the remaining APs, we use an API~\citep{ML-SZ-QW-KW-YZ-SS-CG-TL-ELL-RZ:24} to detect actions and observations as per Table~\ref{tab:igibson_aps}. If an action or an observation is detected, it is added to the AP dictionary. This way, the actor LLM filters an exponentially large space of APs.

\begin{table}[h]
\centering
\small
\begin{tabular}{p{0.18\linewidth} p{0.78\linewidth}}
\toprule
\textbf{Source} & \textbf{Generated Atomic Propositions (APs)} \\
\midrule

Robot hands &
\begin{minipage}[t]{\linewidth}\vspace{0pt}
If left/right hand holds an object, generate \texttt{<object>\_in\_hand}.
\end{minipage}
\\[6pt]

Object states &
\begin{minipage}[t]{\linewidth}\vspace{0pt}
\textbf{For each object and state:}\\[2pt]
\begin{tabular}{@{}l@{}}
\textbf{InsideRoomTypes}: \texttt{<obj>\_in\_<room>}\\
\textbf{Burnt}: \texttt{<obj>\_is\_burnt}\\
\textbf{Cooked}: \texttt{<obj>\_is\_cooked}\\
\textbf{Stained}: \texttt{<obj>\_is\_stained}\\
\textbf{Dusty}: \texttt{<obj>\_is\_dusty}\\
\textbf{Frozen}: \texttt{<obj>\_is\_frozen}\\
\textbf{HeatSourceOrSink}: \texttt{<obj>\_is\_heat\_source\_or\_sink}\\
\textbf{Open}: \texttt{<obj>\_is\_open}\\
\textbf{Sliced}: \texttt{<obj>\_is\_sliced}\\
\textbf{Soaked}: \texttt{<obj>\_is\_soaked}\\
\textbf{ToggledOn}: \texttt{<obj>\_is\_toggled\_on}
\end{tabular}
\end{minipage}
\\[6pt]

Object relations &
\begin{minipage}[t]{\linewidth}\vspace{0pt}
\textbf{For each object pair $(o_1,o_2)$ (excluding floor/self):}\\[2pt]
\begin{tabular}{@{}l@{}}
\textbf{Inside}: \texttt{<o1>\_inside\_<o2>}\\
\textbf{NextTo}: \texttt{<o1>\_next\_to\_<o2>}\\
\textbf{OnFloor}: \texttt{<o1>\_on\_floor}\\
\textbf{OnTop}: \texttt{<o1>\_on\_top\_of\_<o2>}\\
\textbf{Under}: \texttt{<o1>\_is\_under\_<o2>}
\end{tabular}
\end{minipage}
\\[6pt]

Actions (generic) &
\begin{minipage}[t]{\linewidth}\vspace{0pt}
For each simple action (e.g., \texttt{open}, \texttt{close}, \texttt{slice}, \texttt{clean}), generate \texttt{<action>\_<object>}.
\end{minipage}
\\[6pt]

Actions (binary) &
\begin{minipage}[t]{\linewidth}\vspace{0pt}
For binary/2-argument actions (e.g., \texttt{place\_ontop}, \texttt{place\_inside}, \texttt{transfer\_contents\_ontop}, \texttt{place\_nextto}), generate \texttt{<obj1>\_<action>\_<obj2>} for all ordered pairs with \texttt{obj1 != obj2}.
\end{minipage}
\\[6pt]

Termination &
\begin{minipage}[t]{\linewidth}\vspace{0pt}
Always include the terminal proposition \texttt{done}.
\end{minipage}
\\

\bottomrule
\end{tabular}
\caption{Automatic generation rules for atomic propositions (APs) used in iGibson experiments. The code systematically maps robot inventory, per-object states, pairwise relations, and actions into sanitized propositional symbols for subsequent LTL processing.}
\label{tab:igibson_aps}
\end{table}

\subsection{Implementation details: Minecraft}

\subsubsection{Mineflayer API interface}
\label{appendix:minecraft-interface}

We interface with Minecraft using the Mineflayer API, which exposes high-level observations as structured JSON objects and provides access to built-in path-planning routines. We define a set of action primitives on top of this interface to abstract the agent’s decision space while preserving task complexity. Each primitive internally invokes Mineflayer’s planners and lower-level control routines. The full list of primitives is as follows:

\begin{enumerate}
    \item \texttt{mineBlock(bot, blockName)}: Mines 1  block of type \texttt{blockName}, provided it is visible within a 32-block radius.
    
    \item \texttt{placeItem(bot, blockName, position)}: Places a block at a specified position, assuming the location is unoccupied and adjacent to an occupied block.
    
    \item \texttt{craftItem(bot, itemName)}: Crafts 1 item of type \texttt{itemName}, assuming all ingredients are present in the agent’s inventory. Recipes that require a crafting table assume one is nearby.
    
    \item \texttt{smeltItem(bot, itemName, fuelName)}: Smelts 1 item of type \texttt{itemName} using \texttt{fuelName}, assuming both are present in the agent’s inventory and a furnace is nearby.
    
    \item \texttt{equipItem(bot, itemName, destination)}: Equips tools or armor. Some blocks require a minimum tool tier to mine, and the appropriate tool must be equipped.
    
    \item \texttt{exploreUntil(bot, direction, condition)}: Causes the agent to explore in a specified direction until a user-defined condition is met (e.g., locating a specific block).
\end{enumerate}

These primitives allow for rich compositional behavior while delegating locomotion to Mineflayer's planners.  The full sequence of subgoals required to successfully mine a diamond in this setup is as follows:
\begin{enumerate}
    \item Obtain wooden logs.
    \item Craft logs into sticks and planks.
    \item Craft a wooden pickaxe.
    \item Mine stone blocks using the wooden pickaxe.
    \item Craft a stone pickaxe and a furnace.
    \item Obtain raw iron by mining iron blocks.
    \item Smelt raw iron into iron ingots.
    \item Craft an iron pickaxe.
    \item Explore the world and mine a diamond block.
\end{enumerate}

This structured task allows us to evaluate the ability of our actor-critic framework to perform multi-stage reasoning, tool use, and resource management over long horizons.

\subsubsection{List of atomic propositions in Minecraft}
We have two sets of atomic propositions, one for observations and one for high-level actions.

\begin{table}[h]
\centering
\small
\begin{tabular}{ll}
\toprule
\textbf{Proposition} & \textbf{Meaning} \\
\midrule
obs\_has\_log & Agent has at least 1 log \\
obs\_has\_plank & Agent has at least 1 plank \\
obs\_has\_2x\_plank & Agent has $\geq 2$ planks \\
obs\_has\_3x\_plank & Agent has $\geq 3$ planks \\
obs\_has\_4x\_plank & Agent has $\geq 4$ planks \\
obs\_has\_11x\_plank & Agent has $\geq 11$ planks \\
obs\_has\_2x\_stick & Agent has $\geq 2$ sticks \\
obs\_has\_3x\_cobble & Agent has $\geq 3$ cobblestone \\
obs\_has\_8x\_cobble & Agent has $\geq 8$ cobblestone \\
obs\_has\_11x\_cobble & Agent has $\geq 11$ cobblestone \\
obs\_has\_wood\_pickaxe & Agent has wooden pickaxe \\
obs\_has\_stone\_pickaxe & Agent has stone pickaxe \\
obs\_has\_iron\_pickaxe & Agent has iron pickaxe \\
obs\_has\_diamond & Agent has at least 1 diamond \\
obs\_has\_iron\_ingot & Agent has $\geq 1$ iron ingot \\
obs\_has\_3x\_iron\_ingot & Agent has $\geq 3$ iron ingots \\
obs\_has\_1x\_iron\_ore & Agent has $\geq 1$ iron ore \\
obs\_has\_2x\_iron\_ore & Agent has $\geq 2$ iron ore \\
obs\_has\_3x\_iron\_ore & Agent has $\geq 3$ iron ore \\
obs\_has\_crafting\_table & Agent has crafting table \\
obs\_has\_furnace & Agent has furnace \\
obs\_has\_fuel & Agent has fuel (e.g., coal) \\
obs\_near\_crafting\_table & Agent is near crafting table \\
obs\_near\_furnace & Agent is near furnace \\
obs\_diamond\_in\_chunk & Diamonds detected nearby \\
obs\_iron\_in\_chunk & Iron ore detected nearby \\
obs\_coal\_in\_chunk & Coal detected nearby \\
obs\_iron\_pickaxe\_equipped & Iron pickaxe is equipped \\
obs\_stone\_pickaxe\_equipped & Stone pickaxe is equipped \\
obs\_wood\_pickaxe\_equipped & Wooden pickaxe is equipped \\
\bottomrule
\end{tabular}
\caption{Atomic propositions used for LTL constraints and their meanings.}
\label{tab:atomic_props}
\end{table}

\begin{table}[h]
\centering
\small
\begin{tabular}{ll}
\toprule
\textbf{Action} & \textbf{Meaning} \\
\midrule
action\_mine\_log & Mine wood logs \\
action\_mine\_stone & Mine stone \\
action\_mine\_iron\_ore & Mine iron ore \\
action\_mine\_coal & Mine coal \\
action\_mine\_diamond & Mine diamond \\
action\_craft\_planks & Craft planks from logs \\
action\_craft\_stick & Craft sticks from planks \\
action\_craft\_wooden\_pickaxe & Craft wooden pickaxe \\
action\_craft\_stone\_pickaxe & Craft stone pickaxe \\
action\_craft\_iron\_pickaxe & Craft iron pickaxe \\
action\_craft\_crafting\_table & Craft a crafting table \\
action\_craft\_furnace & Craft a furnace \\
action\_smelt\_iron & Smelt iron ore into ingots \\
action\_equip\_wood\_pickaxe & Equip wooden pickaxe \\
action\_equip\_stone\_pickaxe & Equip stone pickaxe \\
action\_equip\_iron\_pickaxe & Equip iron pickaxe \\
action\_explore\_general & Explore randomly \\
action\_explore\_diamond\_down & Explore downward for diamonds \\
action\_place\_crafting\_table & Place crafting table \\
action\_place\_furnace & Place furnace \\
\bottomrule
\end{tabular}
\caption{Action variables used in planning and their associated meanings.}
\label{tab:action_vars}
\end{table}

\clearpage

\subsubsection{List of LTL laws imposed for each actor}\label{appendix:laws}
\paragraph{SayCan}

The hard-hand engineered safety rules that prevent illegal actions are given below:

\begin{enumerate}
    \item \texttt{LTL:}$G(\lnot \textsf{obs\_iron\_pickaxe\_equipped} \rightarrow X(\lnot \textsf{action\_mine\_diamond}))$ \\
    \texttt{Explanation:}{Diamonds cannot be mined unless an iron pickaxe is equipped.}

    \item \texttt{LTL:}$G(\lnot \textsf{obs\_near\_crafting\_table} \rightarrow X(\lnot \textsf{action\_craft\_wooden\_pickaxe} \land \lnot \textsf{action\_craft\_stone\_pickaxe} \land \lnot \textsf{action\_craft\_iron\_pickaxe}))$ \\
    \texttt{Explanation:}{Cannot craft any type of pickaxe unless near a crafting table.}

    \item \texttt{LTL:}$G(\lnot \textsf{obs\_near\_crafting\_table} \rightarrow X(\lnot \textsf{action\_craft\_furnace}))$ \\
    \texttt{Explanation:}{Cannot craft a furnace unless near a crafting table.}

    \item \texttt{LTL:}$G(\lnot(\textsf{obs\_stone\_pickaxe\_equipped} \lor \textsf{obs\_iron\_pickaxe\_equipped}) \rightarrow X(\lnot \textsf{action\_mine\_iron\_ore}))$ \\
    \texttt{Explanation:}{Cannot mine iron ore unless a stone or iron pickaxe is equipped.}

    \item \texttt{LTL:}$G(\lnot(\textsf{obs\_wood\_pickaxe\_equipped} \lor \textsf{obs\_stone\_pickaxe\_equipped} \lor \textsf{obs\_iron\_pickaxe\_equipped}) \rightarrow X(\lnot \textsf{action\_mine\_stone}))$ \\
    \texttt{Explanation:}{Cannot mine stone unless any pickaxe is equipped.}

    \item \texttt{LTL:}$G(\lnot \textsf{obs\_has\_iron\_pickaxe} \rightarrow X(\lnot \textsf{action\_equip\_iron\_pickaxe}))$ \\
    \texttt{Explanation:}{Cannot equip an iron pickaxe unless the agent has one.}

    \item \texttt{LTL:}$G(\lnot \textsf{obs\_has\_3x\_plank} \lor \lnot \textsf{obs\_has\_2x\_stick} \rightarrow X(\lnot \textsf{action\_craft\_wooden\_pickaxe}))$ \\
    \texttt{Explanation:}{Cannot craft a wooden pickaxe without enough planks and sticks.}

    \item \texttt{LTL:}$G(\lnot \textsf{obs\_has\_4x\_plank} \rightarrow X(\lnot \textsf{action\_craft\_crafting\_table}))$ \\
    \texttt{Explanation:}{Cannot craft a crafting table without 4 planks.}

    \item \texttt{LTL:}$G(\lnot \textsf{obs\_has\_8x\_cobble} \rightarrow X(\lnot \textsf{action\_craft\_furnace}))$ \\
    \texttt{Explanation:}{Cannot craft a furnace without 8 cobblestone.}

    \item \texttt{LTL:}$G(\lnot(\textsf{obs\_has\_2x\_stick} \land \textsf{obs\_has\_3x\_iron\_ingot}) \rightarrow X(\lnot \textsf{action\_craft\_iron\_pickaxe}))$ \\
    \texttt{Explanation:}{Cannot craft an iron pickaxe without 2 sticks and 3 iron ingots.}

    \item \texttt{LTL:}$G(\lnot \textsf{obs\_has\_3x\_cobble} \lor \lnot \textsf{obs\_has\_2x\_stick} \rightarrow X(\lnot \textsf{action\_craft\_stone\_pickaxe}))$ \\
    \texttt{Explanation:}{Cannot craft a stone pickaxe without cobblestone and sticks.}

    \item \texttt{LTL:}$G(\lnot \textsf{obs\_coal\_in\_chunk} \lor \lnot(\textsf{obs\_wood\_pickaxe\_equipped} \lor \textsf{obs\_stone\_pickaxe\_equipped} \lor \textsf{obs\_iron\_pickaxe\_equipped}) \rightarrow X(\lnot \textsf{action\_mine\_coal}))$ \\
    \texttt{Explanation:}{Cannot mine coal unless coal is nearby and a pickaxe is equipped.}

    \item \texttt{LTL:}$G(\lnot \textsf{obs\_has\_log} \rightarrow X(\lnot \textsf{action\_craft\_planks}))$ \\
    \texttt{Explanation:}{Cannot craft planks without logs.}

    \item \texttt{LTL:}$G(\lnot \textsf{obs\_has\_2x\_plank} \rightarrow X(\lnot \textsf{action\_craft\_stick}))$ \\
    \texttt{Explanation:}{Cannot craft sticks without 2 planks.}

    \item \texttt{LTL:}$G(\lnot \textsf{obs\_near\_furnace} \lor \lnot \textsf{obs\_has\_1x\_iron\_ore} \lor \lnot \textsf{obs\_has\_fuel} \rightarrow X(\lnot \textsf{action\_smelt\_iron}))$ \\
    \texttt{Explanation:}{Cannot smelt iron without a furnace, fuel, and iron ore.}

    \item \texttt{LTL:}$G(\lnot \textsf{obs\_has\_wood\_pickaxe} \rightarrow X(\lnot \textsf{action\_equip\_wood\_pickaxe}))$ \\
    \texttt{Explanation:}{Cannot equip a wooden pickaxe unless the agent has one.}

    \item \texttt{LTL:}$G(\lnot \textsf{obs\_has\_stone\_pickaxe} \rightarrow X(\lnot \textsf{action\_equip\_stone\_pickaxe}))$ \\
    \texttt{Explanation:}{Cannot equip a stone pickaxe unless the agent has one.}

    \item \texttt{LTL:}$G(\lnot \textsf{obs\_has\_crafting\_table} \rightarrow X(\lnot \textsf{action\_place\_crafting\_table}))$ \\
    \texttt{Explanation:}{Cannot place a crafting table unless the agent has one.}

    \item \texttt{LTL:}$G(\lnot \textsf{obs\_has\_furnace} \rightarrow X(\lnot \textsf{action\_place\_furnace}))$ \\
    \texttt{Explanation:}{Cannot place a furnace unless the agent has one.}
\end{enumerate}

The soft LTL rules implemented by the critic are as follows:
\begin{enumerate}
    \item \texttt{LTL:} $G(\lnot \mathsf{obs\_has\_log} \land \lnot \mathsf{obs\_has\_plank} \land \lnot \mathsf{obs\_has\_2x\_stick} \land \lnot \mathsf{obs\_has\_iron\_pickaxe} \rightarrow X(\mathsf{action\_mine\_log}))$ \\
    \texttt{Explanation:}{If the agent lacks logs, planks, sticks, and an iron pickaxe, it should mine wood logs.}

    \item \texttt{LTL:}$G(\mathsf{obs\_has\_log} \land \lnot \mathsf{obs\_has\_plank} \rightarrow X(\mathsf{action\_craft\_planks}))$ \\
    \texttt{Explanation:}{If the agent has logs but no planks, it should craft planks.}

    \item \texttt{LTL:}$G(\mathsf{obs\_has\_2x\_plank} \land \lnot \mathsf{obs\_has\_2x\_stick} \rightarrow X(\mathsf{action\_craft\_stick}))$ \\
    \texttt{Explanation:}{If the agent has planks but no sticks, it should craft sticks.}

    \item \texttt{LTL:}$G(\mathsf{obs\_has\_4x\_plank} \land \mathsf{obs\_has\_2x\_stick} \land \lnot \mathsf{obs\_has\_crafting\_table} \land \lnot \mathsf{obs\_near\_crafting\_table} \rightarrow X(\mathsf{action\_craft\_crafting\_table}))$ \\
    \texttt{Explanation:}{If the agent has 4 planks and 2 sticks but no crafting table is nearby or already crafted, it must craft a crafting table.}

    \item \texttt{LTL:}$G(\mathsf{obs\_has\_4x\_plank} \land \mathsf{obs\_has\_2x\_stick} \land \mathsf{obs\_has\_crafting\_table} \land \lnot \mathsf{obs\_near\_crafting\_table} \rightarrow X(\mathsf{action\_place\_crafting\_table}))$ \\
    \texttt{Explanation:}{If the agent has 4 planks, 2 sticks, and a crafting table but is not near one, it should place the crafting table.}

    \item \texttt{LTL:}$G(\mathsf{obs\_has\_3x\_plank} \land \mathsf{obs\_has\_2x\_stick} \land \lnot \mathsf{obs\_has\_wood\_pickaxe} \land \mathsf{obs\_near\_crafting\_table} \rightarrow X(\mathsf{action\_craft\_wooden\_pickaxe}))$ \\
    \texttt{Explanation:}{If the agent has 3 planks, 2 sticks, is near a crafting table, and doesn’t already have a wooden pickaxe, it should craft one.}

    \item \texttt{LTL:}$G(\mathsf{obs\_has\_wooden\_pickaxe} \land \lnot \mathsf{obs\_wooden\_pickaxe\_equipped} \land \lnot \mathsf{obs\_has\_3x\_cobble} \rightarrow X(\mathsf{action\_equip\_wooden\_pickaxe}))$ \\
    \texttt{Explanation:}{If the agent has a wooden pickaxe but it isn’t equipped and it lacks three cobblestones, it should equip the pickaxe.}

    \item \texttt{LTL:}$G(\mathsf{obs\_wood\_pickaxe\_equipped} \land \lnot \mathsf{obs\_has\_3x\_cobble} \land \lnot \mathsf{obs\_has\_stone\_pickaxe} \land \lnot \mathsf{obs\_has\_2x\_stick} \rightarrow X(\mathsf{action\_mine\_stone}))$ \\
    \texttt{Explanation:}{If equipped with a wooden pickaxe and lacking stone, the agent should mine cobblestone.}

    \item \texttt{LTL:}$G(\mathsf{obs\_wood\_pickaxe\_equipped} \land \mathsf{obs\_has\_3x\_cobble} \land \lnot \mathsf{obs\_has\_stone\_pickaxe} \rightarrow X(\mathsf{action\_craft\_stone\_pickaxe}))$ \\
    \texttt{Explanation:}{If the agent has 3 cobblestones and a wooden pickaxe equipped, it should craft a stone pickaxe.}

    \item \texttt{LTL:}$G(\mathsf{obs\_has\_3x\_iron\_ore} \land \mathsf{obs\_near\_furnace} \land \mathsf{obs\_has\_fuel} \land \lnot \mathsf{obs\_has\_3x\_iron\_ingot} \rightarrow X(\mathsf{action\_smelt\_iron}))$ \\
    \texttt{Explanation:}{If the agent has 3 iron ore, is near a furnace, has fuel, and lacks 3 ingots, it should smelt iron.}

    \item \texttt{LTL:}$G(\mathsf{obs\_has\_3x\_iron\_ingot} \land \mathsf{obs\_has\_2x\_stick} \land \mathsf{obs\_near\_crafting\_table} \land \lnot \mathsf{obs\_has\_iron\_pickaxe} \rightarrow X(\mathsf{action\_craft\_iron\_pickaxe}))$ \\
    \texttt{Explanation:}{If the agent has the ingredients but no iron pickaxe, it should craft one.}

    \item \texttt{LTL:}$G(\mathsf{obs\_has\_iron\_pickaxe} \land \lnot \mathsf{obs\_iron\_pickaxe\_equipped} \land \lnot \mathsf{obs\_has\_2x\_stick} \rightarrow X(\mathsf{action\_equip\_iron\_pickaxe}))$ \\
    \texttt{Explanation:}{If the agent owns an iron pickaxe but hasn’t equipped it, it should equip the pickaxe.}

    \item \texttt{LTL:}$G(\mathsf{obs\_iron\_pickaxe\_equipped} \land \lnot \mathsf{obs\_diamond\_in\_chunk} \rightarrow X(\mathsf{action\_explore\_diamond\_down}))$ \\
    \texttt{Explanation:}{If an iron pickaxe is equipped and no diamonds are nearby, explore downward.}

    \item \texttt{LTL:}$G(\lnot \mathsf{obs\_has\_iron\_pickaxe} \land \lnot \mathsf{obs\_iron\_pickaxe\_equipped} \rightarrow X(\lnot \mathsf{action\_explore\_diamond\_down}))$ \\
    \texttt{Explanation:}{Do not explore for diamonds unless an iron pickaxe is available and equipped.}

    \item \texttt{LTL:}$G(\mathsf{obs\_has\_fuel} \land \mathsf{obs\_iron\_in\_chunk} \land \mathsf{obs\_coal\_in\_chunk} \rightarrow X(\lnot \mathsf{action\_explore\_general}))$ \\
    \texttt{Explanation:}{Do not explore if iron and coal are already known to be nearby.}
\end{enumerate}

\paragraph{InnerMonologue}
Since our safety violations in SayCan simply lead to environmental feedback and new laws, we do not include any hand-engineered safety laws in InnerMonologue.

\begin{itemize}
    \item \texttt{LTL:} $G(\lnot \textsf{obs\_has\_log} \land \lnot \textsf{obs\_has\_plank} \land \lnot \textsf{obs\_has\_2x\_stick} \land \lnot \textsf{obs\_has\_iron\_pickaxe} \rightarrow X(\textsf{action\_mine\_log}))$ \\
    \texttt{Explanation:} If you don’t have logs, planks, or sticks, mine logs.

    \item \texttt{LTL:} $G(\textsf{obs\_has\_log} \land \lnot \textsf{obs\_has\_plank} \rightarrow X(\textsf{action\_craft\_planks}))$ \\
    \texttt{Explanation:} If you have logs but no planks, craft planks.

    \item \texttt{LTL:} $G(\textsf{obs\_has\_4x\_plank} \land \lnot \textsf{obs\_near\_crafting\_table} \land \lnot \textsf{obs\_has\_crafting\_table} \rightarrow X(\textsf{action\_craft\_crafting\_table}))$ \\
    \texttt{Explanation:} If you have 4 planks, aren’t near a crafting table, and don’t have one, craft a crafting table.

    \item \texttt{LTL:} $G(\textsf{obs\_has\_crafting\_table} \land \textsf{obs\_has\_plank} \land \lnot \textsf{obs\_near\_crafting\_table} \rightarrow X(\textsf{action\_place\_crafting\_table}))$ \\
    \texttt{Explanation:} If you have a crafting table and a plank, but aren't near one, place the crafting table.

    \item \texttt{LTL:} $G(\textsf{obs\_has\_3x\_plank} \land \textsf{obs\_has\_2x\_stick} \land \textsf{obs\_near\_crafting\_table} \land \lnot \textsf{obs\_has\_wood\_pickaxe} \rightarrow X(\textsf{action\_craft\_wooden\_pickaxe}))$ \\
    \texttt{Explanation:} If you have 3 planks, 2 sticks, are near a crafting table, and don’t have a wooden pickaxe, craft one.

    \item \texttt{LTL:} $G(\textsf{obs\_has\_3x\_cobble} \land \textsf{obs\_has\_2x\_stick} \land \textsf{obs\_near\_crafting\_table} \land \lnot \textsf{obs\_has\_stone\_pickaxe} \rightarrow X(\textsf{action\_craft\_stone\_pickaxe}))$ \\
    \texttt{Explanation:} If you have 3 cobble, 2 sticks, are near a crafting table, and don’t have a stone pickaxe, craft one.

    \item \texttt{LTL:} $G(\textsf{obs\_has\_8x\_cobble} \land \textsf{obs\_near\_crafting\_table} \land \lnot \textsf{obs\_has\_furnace} \land \lnot \textsf{obs\_near\_furnace} \rightarrow X(\textsf{action\_craft\_furnace}))$ \\
    \texttt{Explanation:} If you have 8 cobble, are near a crafting table, and don’t have or see a furnace, craft one.

    \item \texttt{LTL:} $G(\textsf{obs\_has\_3x\_iron\_ore} \land \textsf{obs\_near\_furnace} \land \textsf{obs\_has\_fuel} \land \lnot(\textsf{obs\_has\_iron\_pickaxe} \lor \textsf{obs\_has\_3x\_iron\_ingot}) \rightarrow X(\textsf{action\_smelt\_iron}))$ \\
    \texttt{Explanation:} If you have iron ore and fuel, are near a furnace, and don’t already have iron ingots or a pickaxe, smelt iron.

    \item \texttt{LTL:} $G(\textsf{obs\_has\_3x\_iron\_ingot} \land \textsf{obs\_has\_2x\_stick} \land \textsf{obs\_near\_crafting\_table} \land \lnot \textsf{obs\_has\_iron\_pickaxe} \rightarrow X(\textsf{action\_craft\_iron\_pickaxe}))$ \\
    \texttt{Explanation:} If you have 3 iron ingots, 2 sticks, are near a crafting table, and don’t have an iron pickaxe, craft one.

    \item \texttt{LTL:} $G(\textsf{obs\_has\_iron\_pickaxe} \land \lnot \textsf{obs\_iron\_pickaxe\_equipped} \rightarrow X(\textsf{action\_equip\_iron\_pickaxe}))$ \\
    \texttt{Explanation:} If you have an iron pickaxe but it’s not equipped, equip it.

    \item \texttt{LTL:} $G(\textsf{obs\_diamond\_in\_chunk} \land \textsf{obs\_iron\_pickaxe\_equipped} \rightarrow X(\textsf{action\_mine\_diamond}))$ \\
    \texttt{Explanation:} If diamonds are nearby and an iron pickaxe is equipped, mine the diamond.

    \item \texttt{LTL:} $G(\lnot \textsf{obs\_diamond\_in\_chunk} \land \textsf{obs\_iron\_pickaxe\_equipped} \rightarrow X(\textsf{action\_explore\_diamond\_down}))$ \\
    \texttt{Explanation:} If no diamonds are visible and an iron pickaxe is equipped, explore downward for diamonds.

    \item \texttt{LTL:} $G(\textsf{obs\_diamond\_in\_chunk} \lor \lnot \textsf{obs\_iron\_pickaxe\_equipped} \rightarrow X(\lnot \textsf{action\_explore\_diamond\_down}))$ \\
    \texttt{Explanation:} If diamonds are visible or no iron pickaxe is equipped, do not explore downward.

    \item \texttt{LTL:} $G(\lnot \textsf{obs\_wood\_pickaxe\_equipped} \land \lnot \textsf{obs\_stone\_pickaxe\_equipped} \land \lnot \textsf{obs\_iron\_pickaxe\_equipped} \rightarrow X(\lnot \textsf{action\_mine\_stone}))$ \\
    \texttt{Explanation:} If no pickaxe is equipped, don’t mine stone.

    \item \texttt{LTL:} $G(\lnot \textsf{obs\_stone\_pickaxe\_equipped} \land \lnot \textsf{obs\_iron\_pickaxe\_equipped} \rightarrow X(\lnot \textsf{action\_mine\_iron\_ore}))$ \\
    \texttt{Explanation:} Don’t mine iron ore unless a stone or iron pickaxe is equipped.

    \item \texttt{LTL:} $G(\lnot \textsf{obs\_has\_8x\_cobble} \rightarrow X(\lnot \textsf{action\_craft\_furnace}))$ \\
    \texttt{Explanation:} Don’t craft a furnace without 8 cobblestone.

    \item \texttt{LTL:} $G(\lnot \textsf{obs\_wood\_pickaxe\_equipped} \land \lnot \textsf{obs\_stone\_pickaxe\_equipped} \land \lnot \textsf{obs\_iron\_pickaxe\_equipped} \rightarrow X(\lnot \textsf{action\_mine\_coal}))$ \\
    \texttt{Explanation:} Don’t mine coal without a pickaxe equipped.

    \item \texttt{LTL:} $G(\lnot \textsf{obs\_has\_3x\_plank} \lor \lnot \textsf{obs\_has\_2x\_stick} \lor \lnot \textsf{obs\_near\_crafting\_table} \lor \textsf{obs\_has\_wood\_pickaxe} \rightarrow X(\lnot \textsf{action\_craft\_wooden\_pickaxe}))$ \\
    \texttt{Explanation:} Don’t craft a wooden pickaxe unless you have the materials and don’t already have one.

    \item \texttt{LTL:} $G(\lnot \textsf{obs\_has\_3x\_cobble} \lor \lnot \textsf{obs\_has\_2x\_stick} \lor \lnot \textsf{obs\_near\_crafting\_table} \lor \textsf{obs\_has\_stone\_pickaxe} \rightarrow X(\lnot \textsf{action\_craft\_stone\_pickaxe}))$ \\
    \texttt{Explanation:} Don’t craft a stone pickaxe unless you have materials and don’t already have one.

    \item \texttt{LTL:} $G(\lnot \textsf{obs\_has\_3x\_iron\_ingot} \lor \lnot \textsf{obs\_has\_2x\_stick} \lor \lnot \textsf{obs\_near\_crafting\_table} \lor \textsf{obs\_has\_iron\_pickaxe} \rightarrow X(\lnot \textsf{action\_craft\_iron\_pickaxe}))$ \\
    \texttt{Explanation:} Don’t craft an iron pickaxe unless you have materials and don’t already have one.
\end{itemize}

\subsection{Prompts}
Here, we provide the general prompts that guide the LLM. Most prompts are implemented as Python f-strings; to keep them concise, we show the templates without substituting the variable names.
\subsubsection{Behavior}
\paragraph{Actor prompts}
First, the context prompt:
\begin{lstlisting}[style=customCode]
Problem:
You are designing instructions for a household robot. 
The goal is to guide the robot to modify its environment from its current state to a desired final state. 
The input will be the current environment state, the target environment state, the objects you can interact with in the environment. 
The output should be the next action command that the robot may execute in order to make progress towards achieving the target state. 

Data format: After # is the explanation.

Format of the states:
The current environment state is described as a list of dictionaries. Each dictionary describes an object, its category, followed by its description, which includes several of its properties including a description of its location.
For example: 
{'name': 'plywood_1', 
'category': 'plywood', 
'State description ': 
['Location: living_room', 'Stain status: Clean', 'Dust status: Clean', 'Touching: room_floor_living_room_0', 'Touching: plywood_0', 'Touching: room_floor_kitchen_0', 'On top of: room_floor_living_room_0', 'On top of: room_floor_kitchen_0', 'On floor: room_floor_living_room_0', 'Next to: plywood_0']}

You will be provided with the environment state of each object in the environment in the above format. 

Format of the action commands:
Action commands is a dictionary with the following format:
{
        \"action\": \"action_name\", 
        \"object\": \"target_obj_name\",
        \"thoughts\": \"inner monologue describing why this action is chosen\",
}

or 

{
        \"action\": \"action_name\", 
        \"object\": \"target_obj_name1,target_obj_name2\",
        \"thoughts\": \"inner monologue describing why this action is chosen\",
}

The action_name must be one of the following:
LEFT_GRASP # the robot grasps the object with its left hand, to execute the action, the robot's left hand must be empty, e.g. {'action': 'LEFT_GRASP', 'object': 'apple_0'}.
RIGHT_GRASP # the robot grasps the object with its right hand, to execute the action, the robot's right hand must be empty, e.g. {'action': 'RIGHT_GRASP', 'object': 'apple_0'}.
LEFT_PLACE_ONTOP # the robot places the object in its left hand on top of the target object and release the object in its left hand, e.g. {'action': 'LEFT_PLACE_ONTOP', 'object': 'table_1'}.
RIGHT_PLACE_ONTOP # the robot places the object in its right hand on top of the target object and release the object in its left hand, e.g. {'action': 'RIGHT_PLACE_ONTOP', 'object': 'table_1'}.
LEFT_PLACE_INSIDE # the robot places the object in its left hand inside the target object and release the object in its left hand, to execute the action, the robot's left hand must hold an object, and the target object can't be closed e.g. {'action': 'LEFT_PLACE_INSIDE', 'object': 'fridge_1'}.
RIGHT_PLACE_INSIDE # the robot places the object in its right hand inside the target object and release the object in its left hand, to execute the action, the robot's right hand must hold an object, and the target object can't be closed, e.g. {'action': 'RIGHT_PLACE_INSIDE', 'object': 'fridge_1'}.
RIGHT_RELEASE # the robot directly releases the object in its right hand, to execute the action, the robot's left hand must hold an object, e.g. {'action': 'RIGHT_RELEASE', 'object': 'apple_0'}.
LEFT_RELEASE # the robot directly releases the object in its left hand, to execute the action, the robot's right hand must hold an object, e.g. {'action': 'LEFT_RELEASE', 'object': 'apple_0'}.
OPEN # the robot opens the target object, to execute the action, the target object should be openable and closed, also, toggle off the target object first if want to open it, e.g. {'action': 'OPEN', 'object': 'fridge_1'}.
CLOSE # the robot closes the target object, to execute the action, the target object should be openable and open, e.g. {'action': 'CLOSE', 'object': 'fridge_1'}.
COOK # the robot cooks the target object, to execute the action, the target object should be put in a pan, e.g. {'action': 'COOK', 'object': 'apple_0'}.
CLEAN # the robot cleans the target object, to execute the action, the robot should have a cleaning tool such as rag, the cleaning tool should be soaked if possible, or the target object should be put into a toggled on cleaner like a sink or a dishwasher, e.g. {'action': 'CLEAN', 'object': 'window_0'}.
FREEZE # the robot freezes the target object e.g. {'action': 'FREEZE', 'object': 'apple_0'}.
UNFREEZE # the robot unfreezes the target object, e.g. {'action': 'UNFREEZE', 'object': 'apple_0'}.
SLICE # the robot slices the target object, to execute the action, the robot should have a knife in hand, e.g. {'action': 'SLICE', 'object': 'apple_0'}.
SOAK # the robot soaks the target object, to execute the action, the target object must be put in a toggled on sink, e.g. {'action': 'SOAK', 'object': 'rag_0'}.
DRY # the robot dries the target object, e.g. {'action': 'DRY', 'object': 'rag_0'}.
TOGGLE_ON # the robot toggles on the target object, to execute the action, the target object must be closed if the target object is openable and open e.g. {'action': 'TOGGLE_ON', 'object': 'light_0'}.
TOGGLE_OFF # the robot toggles off the target object, e.g. {'action': 'TOGGLE_OFF', 'object': 'light_0'}.
LEFT_PLACE_NEXTTO # the robot places the object in its left hand next to the target object and release the object in its left hand, e.g. {'action': 'LEFT_PLACE_NEXTTO', 'object': 'table_1'}.
RIGHT_PLACE_NEXTTO # the robot places the object in its right hand next to the target object and release the object in its right hand, e.g. {'action': 'RIGHT_PLACE_NEXTTO', 'object': 'table_1'}.
LEFT_TRANSFER_CONTENTS_INSIDE # the robot transfers the contents in the object in its left hand inside the target object, e.g. {'action': 'LEFT_TRANSFER_CONTENTS_INSIDE', 'object': 'bow_1'}.
RIGHT_TRANSFER_CONTENTS_INSIDE # the robot transfers the contents in the object in its right hand inside the target object, e.g. {'action': 'RIGHT_TRANSFER_CONTENTS_INSIDE', 'object': 'bow_1'}.
LEFT_TRANSFER_CONTENTS_ONTOP # the robot transfers the contents in the object in its left hand on top of the target object, e.g. {'action': 'LEFT_TRANSFER_CONTENTS_ONTOP', 'object': 'table_1'}.
RIGHT_TRANSFER_CONTENTS_ONTOP # the robot transfers the contents in the object in its right hand on top of the target object, e.g. {'action': 'RIGHT_TRANSFER_CONTENTS_ONTOP', 'object': 'table_1'}.
LEFT_PLACE_NEXTTO_ONTOP # the robot places the object in its left hand next to target object 1 and on top of the target object 2 and release the object in its left hand, e.g. {'action': 'LEFT_PLACE_NEXTTO_ONTOP', 'object': 'window_0, table_1'}.
RIGHT_PLACE_NEXTTO_ONTOP # the robot places the object in its right hand next to object 1 and on top of the target object 2 and release the object in its right hand, e.g. {'action': 'RIGHT_PLACE_NEXTTO_ONTOP', 'object': 'window_0, table_1'}.
LEFT_PLACE_UNDER # the robot places the object in its left hand under the target object and release the object in its left hand, e.g. {'action': 'LEFT_PLACE_UNDER', 'object': 'table_1'}.
RIGHT_PLACE_UNDER # the robot places the object in its right hand under the target object and release the object in its right hand, e.g. {'action': 'RIGHT_PLACE_UNDER', 'object': 'table_1'}.
DONE # the robot has achieved the target environment as per your best judgement, e.g. {'action': 'DONE', 'object': 'none'}.

Format of the interactable objects:
Interactable object will contain multiple lines, each line is a dictionary with the following format:
{
    \"name\": \"object_name\",
    \"category\": \"object_category\"
}
object_name is the name of the object, which you must use in the action command, object_category is the category of the object, which provides a hint for you in interpreting initial and goal condtions.


thoughts: This is your inner monologue describing why you choose this action, it will be used as a feedback to improve your next action command.

Please pay special attention:
1. The robot can only hold one object in each hand.
2. Action name must be one of the above action names, and the object name must be one of the object names listed in the interactable objects.
3. All PLACE actions will release the object in the robot's hand, you don't need to explicitly RELEASE the object after the PLACE action.
4. For LEFT_PLACE_NEXTTO_ONTOP and RIGHT_PLACE_NEXTTO_ONTOP, the action command are in the format of {'action': 'action_name', 'object': 'obj_name1, obj_name2'}
5. If you want to perform an action to an target object, you must make sure the target object is not inside a closed object.
6. For actions like OPEN, CLOSE, SLICE, COOK, CLEAN, SOAK, DRY, FREEZE, UNFREEZE, TOGGLE_ON, TOGGLE_OFF, at least one of the robot's hands must be empty, and the target object must have the corresponding property like they're openable, toggleable, etc.
7. For PLACE actions and RELEASE actions, the robot must hold an object in the corresponding hand.
8. Before slicing an object, the robot can only interact with the object (e.g. peach_0), after slicing the object, the robot can only interact with the sliced object (e.g. peach_0_part_0).
9. You can only clean a stain with a soaked cleaning tool like rag, or put the stained object into a toggled on cleaner like sink or dishwasher.
10. To soak an object, first place the object into a toggled on sink, then soak it. Do not soak an object outside a sink.
11. Jars, Bags, and other objects must be OPENED, before you put things inside them.


Please output a SINGLE action command(in the given format) that the robot may execute next in order to make progress towards achieving the target environment state.
\end{lstlisting}

Now, the prompt given at each timestep

\begin{lstlisting}[style=customCode]
Your Task:
Input: 

Currently, the robot is holding:
{robot_state}

Current environment state: 
{object_state}

This may be summarized as the following atomic propositions being true:
{', '.join(APs) if APs else 'No atomic propositions are true.'}

Goal State description: 
{task_description}

Feedback on failed actions from the environment:
{feedback if feedback else "No feedback yet."}


Feedback from the critic:
{failed_actions}

The feedback includes instructions from your critic (via an LTL law with an explanation), which will block certain actions that they think will lead to failure. It also includes any failed actions you have tried to execute in the past.
If you fail an action, please use the feedback to guide your next action choice.
DO NOT REPEAT AN ACTION IF THE CRITIC HAS BLOCKED IT OR IF IT HAS FAILED BEFORE.

Inner Monologue: 
{inner_monologue}

Previous Successful action:
{old_action}



Please output the A SINGLE ACTION COMMAND (in the given format), the current environment state will make progress towards the target environment state. 
Only output the action command with nothing else.

Output:

\end{lstlisting}

\paragraph{Critic prompts: } First, context prompt for trajectories:

\begin{lstlisting}[style=customCode]
You are an expert symbolic critic analyzing a robot's task execution trajectory. '
Your goal is to propose Linear Temporal Logic (LTL) laws that will improve the robot's 
efficiency, prevent common mistakes, and ensure task completion.

## ROBOT'S GOAL
{rule_data}

This goal is written in terms of a list of formulas involving APs. All of these formulas need to be true in order to finish the task. 

## ATOMIC PROPOSITIONS

### Observation Variables (Environment State):
{chr(10).join([f"  - {ap}" for ap in APs['obs_APs']])}
**Key Observation Categories:**
- Object locations: `object_X_in_location`, `object_X_on_Y`, `object_X_inside_Y`
- Hand states: `object_X_in_hand` (what robot is holding)
- Object properties: `object_X_is_open`, `object_X_is_clean`, etc.
- Spatial relations: `object_X_next_to_Y`, `object_X_under_Y`


### Action Variables (Robot Actions):
{chr(10).join([f"  - {ap}" for ap in APs['action_APs_true']])}

**Action Categories:**
1. **Direct object actions**: `action_object` (grasp, open, close, clean, freeze, unfreeze, slice, soak, dry, toggle_on, toggle_off)
2. **Placement actions**: `object1_place_relation_object2` (place_ontop, place_inside, place_nextto, place_under, release)
3. **Transfer actions**: `object1_transfer_contents_relation_object2`
4. **Complex placement**: `object1_place_nextto_ontop_object2_object3`
5. **Task completion**: `done`

## LTL SYNTAX RULES
- **Operators**: `&` (and), `|` (or), `!` (not), `G` (globally), `X` (next), `->` (implies)
- **Format**: `G(observation_condition -> X(action_condition))`
- **Focus**: Prefer blocking bad actions rather than forcing specific actions
- **Trace structure**: (obs, action, obs, action, ...)

## YOUR TASK
Analyze the robot's trajectory and propose LTL laws that:
1. **Prevent inefficiencies**: Stop redundant or counterproductive actions
2. **Ensure prerequisites**: Block actions when preconditions aren't met. ( for example, a fridge must be open to place something inside it or take something out of it)
3. **Promote task completion**: Add rules to recognize when goals are achieved
4. **Maintain feasibility**: Avoid over-constraining the action space

\end{lstlisting}
Next, the critic main prompt for trajectories:

\begin{lstlisting}[style=customCode]
## TRAJECTORY ANALYSIS

You have already designed a few rules, however they were not enough to accomplish the task. You need to add an additional number of rules to get there!


### Robot Goal:
{rule_data}

### Execution Trace for the UNSUCCESSFUL RUN:
{format_AP_log_for_critic(AP_log)}


### Previous Rules:

Previously, you have already designed some rules based on priot traces, now, given a new trace, suggest a small number of ADDITIONAL RULES

The previous rules are:
{existing_rules}

## ANALYSIS FRAMEWORK

**Step 1: Identify the most repeated action in the trajectory
**Step 2: Understand why this action was repeated, and why it is necessary to repeat this action
**Step 3: Define LTL Laws which block this action when it is unnecessary

## EXAMPLES OF GOOD LTL LAWS

**Prerequisite checking:**
```
G(object_X_in_hand -> object_X_place_in_target_position)
"Only place objects you're actually holding"
```

**Efficiency enforcement:**
```
G((task_complete_for_X) -> X(!grasp_object_X))
"Don't grasp objects that are already correctly placed"
```


The goal consists of many parts as there are many objects in the environment. 
You are refining an existing trajectory, focus on eliminating repeated, useless actions.
Do not constrain yourself to a small number of laws, make as many laws as you need. These laws are boolean, so be very precise.
Make different laws about different items. Dont try to merge all your laws into one big law, write many SIMPLE laws


## OUTPUT REQUIREMENTS

Provide your analysis in exactly this format:

**Explanation:**
1. **Initial State**: Describe the starting configuration across all relevant objects
2. **Goal Interpretation**: What the robot needs to accomplish for all listed sub-goals (treat them as possibly dependent)
3. **Required Steps**: Logical sequence to achieve the full multi-object goal  
4. **Most repetitive action** : What was the most repetitive action in the trajctory provided?
5. **Law Strategy**: Propose a law to block this action when not necessary

**Laws:**
```json
[
{{
    "rule": "G(observation_condition -> X(action_condition))",
    "explanation": "Clear explanation of why this law improves performance"
}},
{{
    "rule": "G(another_condition -> X(another_action))", 
    "explanation": "Another law addressing a different issue"
}}
]
```

**CRITICAL REMINDERS:**
- Use ONLY the provided observation and action APs
- Laws should be in format: `G(obs_condition -> X(action_condition))`
- Focus on blocking problematic actions, not forcing specific ones
- Ensure laws don't make the task impossible by over-constraining


\end{lstlisting}

Next, the critic context prompt for overconstrained states:

\begin{lstlisting}[style=customCode]
You are an expert symbolic critic observing a robot's behavior.

The robot's goal is:
{AP_log[-1]['goal']}

You are given:
1. A list of all atomic observation and action variables
2. The observation variables true at the current timestep
3. A set of LTL rules that are overconstraining (they block all actions)
4. The actions currently allowed by those laws (conflicting actions)

Your task:
- Analyze why the constraining rules conflict.
- Replace ONLY the constraining rules with new ones that resolve the deadlock.
- Keep all other rules unchanged.
- New rules must enforce **sequentiality** by adding conditions like `!o2` to break ties.
- New rules must strictly follow this format:
`G(expression1 -> X(expression2))`

Allowed operators:
- & (and), | (or), ! (not), G (globally), X (next), -> (implies)

Important:
- Each output must be valid JSON.
- Each rule must have the structure: {{"rule": "...", "explanation": "..."}}
- Output must be a JSON array of objects, with **double quotes only**.
- Do not output anything except the JSON array.

Example of correction:
If both rules are `G(o1 -> X(a1))` and `G(o2 -> X(a2))` and both o1, o2 hold,
replace one with `G(o1 & !o2 -> X(a1))` and keep the second as G(o2 -> X(a2)).
Make sure to return both.

Think carefully about the goal and current state first.
Then output the replacement rules as JSON only.
\end{lstlisting}
Next, the critic main prompt for overconstrained states:

\begin{lstlisting}[style=customCode]
Observation variables:
{",".join(APs['obs_APs'])}

Action variables:
{",".join(APs['action_APs'])}


True observation variables at current timestep:
{AP_list_curr}

Overconstraining rules (to be replaced):
{constraining_rules}

Conflicting actions:
{valid_actions}

Now output the replacement rules in the following strict JSON format:

[
{{
    "rule": "G(... -> X(...))",
    "explanation": "..."
}},
{{
    "rule": "G(... -> X(...))",
    "explanation": "..."
}}
]

Nothing else.
\end{lstlisting}

\subsubsection{Minecraft}
\paragraph{Actor prompts}
First, the context prompt:
\begin{lstlisting}[style=customCode]
You are a helpful assistant that responds with a primitive (built in mineflayer) which will lead to completing any Minecraft task specified by me.

At each round of conversation, I will give you
Code from the last round: ...
Execution error: ...
Chat log: ...
Biome: ...
Time: ...
Nearby blocks: ... ( A list of all uniqque blocks in a 16 block radius, you may use mineBlock to collect any of these blocks)
Nearby entities (nearest to farthest):
Neighbourhood blocks: ... (A list of blocks in your immediate neighbourhood i.e. a 2 block radius)
Health: ...
Hunger: ...
Position: ...
Equipment: ...
Inventory (xx/36): ... (A list of all items in your inventory, with their counts)
Chests: ...
Task: ...
Context: ...
Critique: ...
Previous failed code: ...

You should then respond to me with

Thinking:
Think out loud in natural language about what you observe, what you need to accomplish, and what you should do next. This should be free-form reasoning, not a structured list.            

Code:
1) You must respond with a single line of code that corresponds to one of the following primitives:
    - Use `mineBlock(bot, name)` to collect blocks. Do not use `bot.dig` directly.
    - Use `craftItem(bot, name)` to craft items. Do not use `bot.craft` or `bot.recipesFor` directly.
    - Use `smeltItem(bot, itemName, fuelName)` to smelt itemName using fuelName. Do not use `bot.openFurnace` directly. Each item will consume one fuel.
    - Use `placeItem(bot, name, position)` to place blocks. Do not use `bot.placeBlock` directly.
    - Use exploreUntil(bot, direction, maxTime, callback) to explore, where,
        - direction is a Vec3 with values -1, 0, or 1 (e.g., new Vec3(1, 0, 1) to explore diagonally).
        - maxTime is in seconds (default is 60).
        - callback is a function that returns a truthy value when the exploration goal is met. If it returns something truthy, the bot stops exploring early and exploreUntil returns that value. Otherwise, exploration continues until the time runs out. For example, callback can be () => {{
                    return bot.findBlock({{ matching: block => block.name === "iron_ore", maxDistance: 32 }});
                }}
    - Use `equipItem(bot, name, destination)` to equip an item in the bot's hand or armor slots. The default for destination is 'hand'. For example, `equipItem(bot, "wooden_pickaxe")` equips a wooden pickaxe in the bot's hand.
2)  Every primitive function must be awaited, as they are asynchronous.
3)  Functions in the "last chosen primitive" section will not be saved or executed. Do not reuse functions listed there. If there is no error, it was executed successfully, if there is an error, it was not executed successfully
4) `maxDistance` should always be 32 for `bot.findBlocks` and `bot.findBlock`. Do not cheat.
5)  Do not use `bot.on` or `bot.once` to register event listeners. You definitely do not need them.
6)  Make sure you use the correct names for blocks and items, as they are case-sensitive. For example, use "stone" instead of "Stone", "oak_log" instead of "Oak Log", etc.



You should only respond in the format as described below:
RESPONSE FORMAT:
Thinking:
[Free-form reasoning about what you observe, what you need to do, and what action to take next]

Code:
```javascript
await yourChosenPrimitive(bot,corresponding arguments);
```

\end{lstlisting}

Now, the prompt given at each timestep

\begin{lstlisting}[style=customCode]
Response from the last round: \n {state.responseLastRound} \n
Execution error: {state.executionError if state.executionError else "None"}
Biome: {state.biome}
Time: {state.time}
Nearby blocks: {state.nearbyBlocks if state.nearbyBlocks else "None"}
Nearby entities (nearest to farthest): {", ".join([f"{entity.name} ({entity.type})" for entity in state.nearbyEntities]) if state.nearbyEntities else "None"}
Neighbourhood blocks: {", ".join([f"{block.name} at ({block.position.x}, {block.position.y}, {block.position.z})" for block in state.neighbourhood]) if state.neighbourhood else "None"}
Health: {state.health}
Hunger: {state.hunger}
Position: ({state.position.x}, {state.position.y}, {state.position.z})          
Equipment: Hand: {state.equipment.hand}, Armor: [Head: {state.equipment.armor.head}, Chest: {state.equipment.armor.chest}, Legs: {state.equipment.armor.legs}, Feet: {state.equipment.armor.feet}]
Inventory (count: {state.inventoryCount}): {", ".join([f"{item.name} ({item.count})" for item in state.inventory]) if state.inventory else "None"}
Chests: {", ".join(state.chests) if state.chests else "None"}
Task: {state.task}
Context: {state.context}
Critique: {state.critic}
Previous failed code: You previously attempted the following codes, and they didnt work because they violated the critics recommendation {failed_codes if failed_codes else "None"}
    
\end{lstlisting}

Next, the critic context prompt for trajectories:

\begin{lstlisting}[style=customCode]
You are an expert critic observing the trajectory of a Minecraft agent. The goal of the agent is to mine a diamond.

You are given:
1) A list of atomic observation and action variables
2) A list of failures that occurred in trajectories
3) Existing LTL rules implemented

You will be given a series of steps taken by the agent, including observations, actions, and success/failure of the action with an error message.
Your task is to analyze the trajectory and provide LTL laws that constrain the agent's actions in order to boost efficiency and performance. 
The laws should be in the form of LTL formulas, and you should provide a brief explanation of each law. The boolean variables used in the laws are defined as follows:

Observation Variables: 
{", ".join(OBS_VARIABLES_LIST)}

The observations in the atomic proposition space are described as follows:

- `obs_has_x` corresponds to having the item `x` in the inventory of the agent. 
- `obs_near_crafting_table` or `obs_near_furnace` define if the agent is within an interacting distance of a crafting table or furnace.
- `obs_has_x_equipped` corresponds to an object `x` (e.g., an iron pickaxe) actively equipped.
- Only one item can be equipped at any point in time. 
- You may propose additional observation variables if needed to express useful rules.

Action Variables:
{", ".join(ACTION_VARIABLES_LIST)}

The actions the agent can perform are limited to a few types:

- `action_mine_x`: mines the item `x`. Certain blocks require certain tools:
    - stone: wood pickaxe or better
    - iron: stone pickaxe or better
    - diamond: iron pickaxe or better
- `action_craft_x`: crafts an item `x`, if prerequisites and (if needed) a crafting table are present.
- `action_smelt_iron`: smelts raw iron into ingots using fuel and a furnace.
- `action_equip_x`: equips a tool for mining. Only one tool can be equipped at a time.
- `action_explore`: used to find resources not currently visible.
- `action_place_x`: places an item like a crafting table or furnace to enable usage.
\end{lstlisting}
Next, the critic main prompt for trajectories:

\begin{lstlisting}[style=customCode]
You are a symbolic critic observing the trajectory of a Minecraft agent. The agent is inefficient, often repeats work, and occasionally causes errors like trying to mine without the right tool or crafting without the ingredients.

GOAL OF AGENT: MINE A DIAMOND  
YOUR GOAL: PROPOSE LTL LAWS THAT PROMOTE THE AGENT'S PROGRESS TOWARD THIS GOAL AND PREVENT INEFFICIENCIES.

THINK OF THE BASIC TASK GRAPH REQUIRED TO MINE A DIAMOND, and PROPOSE LTL LAWS TO GUIDE THE AGENT ALONG THAT GRAPH.

Before forcing any action, think of checking if the subgoals to do that action are met.


### Your task:
1. Decompose the task of mining a diamond into symbolic subgoals: acquiring wood, crafting tools, smelting, equipping tools, etc.
2. For each subgoal transition (e.g., "has stone => craft stone_pickaxe"), propose an **LTL law** that enables or encourages this step.
3. Also identify any **errors** or **inefficiencies** in the trajectory. For each one, propose an LTL law to prevent that mistake in the future.
4. Focus on writing LTL laws in the form `G(condition => X(action))`. Use observation variables for `condition`, and action variables for `action`.
5. Avoid overly specific or redundant laws. Try to generalize from the plan, not just from individual steps.
6. You may also propose **new boolean observation variables** if needed to express useful constraints (e.g., `obs_has_stone_pickaxe`, `obs_seen_diamond_block`).
7. Your final set of LTL laws should include:
- >=3 rules that **encourage efficient, goal-aligned behavior**
- >=1 rule that **discourages observed inefficient behavior**

### Existing Inputs:
- Existing LTL laws: {SOFT_LTL_RULES_SAYCAN}
- Action and observation variables:
    - Actions: {", ".join(ACTION_VARIABLES_LIST)}
    - Observations: {", ".join(OBS_VARIABLES_LIST)}


### Agent Trajectory:
{format_trajectory_for_critic(trace)}


### Output Format:
Return the following in your response:

---

### Reasoning:
1. **Plan Decomposition**: Write out the full high-level plan to mine a diamond as a sequence of symbolic subgoals (e.g., get wood-> make planks-> craft tools-> smelt-> equip-> mine).
2. **Plan  Conversion**:  List the sequence of obs props and action props that correspond to this plan
2. **Positive Constraints**: For at least four transitions in this plan, propose a rule of the form `G(preconditions => X(useful_action))` that helps the agent complete the task efficiently.
3. **Negative Constraints**: Identify any mistakes in the trajectory (e.g., crafting without ingredients, mining without tools), and write `G(bad_condition => X(!bad_action))` rules to prevent them.
4. **Coverage**: Ensure your rules cover multiple stages of the plan (not just early or late stages).
5. **Reusability**: The rules should generalize and not rely on specific step numbers.


---
Laws:
- `efficiency_laws`: a list of LTL rules that **promote efficient behaviors**, each with a brief explanation.
- `inefficiency_laws`: a list of LTL rules that **prevent mistakes**, each with a brief explanation.
- Mention which part of the plan each law corresponds to.
\end{lstlisting}

Next, the critic context prompt for overconstrained states:

\begin{lstlisting}[style=customCode]
You are an expert critic observing the trajectory of a Minecraft agent. The goal of the agent is to mine a diamond.

Sometimes, the laws you impose are too constraining and prevent all possible actions. Your job is to break these deadlocks

You are given:
1) A list of atomic observation and action variables

You will be given a particular timestep where the LTL laws led to no feasible action, and your task is to resolve the conflict by either modifying or deleting one of the LTL laws.

The laws should be in the form of LTL formulas, and you should provide a brief explanation of each law. The boolean variables used in the laws are defined as follows:

Observation Variables: 
{", ".join(OBS_VARIABLES_LIST)}

The observations in the atomic proposition space are described as follows:

- `obs_has_x` corresponds to having the item `x` in the inventory of the agent. 
- `obs_near_crafting_table` or `obs_near_furnace` define if the agent is within an interacting distance of a crafting table or furnace.
- `obs_has_x_equipped` corresponds to an object `x` (e.g., an iron pickaxe) actively equipped.
- Only one item can be equipped at any point in time. 
- You may propose additional observation variables if needed to express useful rules.

Action Variables:
{", ".join(ACTION_VARIABLES_LIST)}

The actions the agent can perform are limited to a few types:

- `action_mine_x`: mines the item `x`. Certain blocks require certain tools:
    - stone: wood pickaxe or better
    - iron: stone pickaxe or better
    - diamond: iron pickaxe or better
- `action_craft_x`: crafts an item `x`, if prerequisites and (if needed) a crafting table are present.
- `action_smelt_iron`: smelts raw iron into ingots using fuel and a furnace.
- `action_equip_x`: equips a tool for mining. Only one tool can be equipped at a time.
- `action_explore`: used to find resources not currently visible.
- `action_place_x`: places an item like a crafting table or furnace to enable usage.
\end{lstlisting}
Next, the critic main prompt for overconstrained states:

\begin{lstlisting}[style=customCode]
You are a symbolic critic observing the trajectory of a Minecraft agent. The agent is inefficient, often repeats work, and occasionally causes errors like trying to mine without the right tool or crafting without the ingredients.

GOAL OF AGENT: MINE A DIAMOND
YOUR GOAL: You are given a timestep where the LTL laws led to no feasible action, and your task is to resolve the conflict by either modifying or deleting one of the LTL laws.

Given the set of observations, no actions are allowed in the given instance. Modify one or both of the laws to break this deadlock.

### Your task:
1. First, reason about which rule is less useful given the constraints
2. Modify that law
3. Make sure there is atleast one feasible action in the given state after the modification of laws.

### Rules you should output:
- Each rule should follow the form: `G(condition => X(action))`
- Use **observation variables** (e.g., obs_has_x, obs_near_x, obs_equipped_x) in the `condition`.
- Use **action variables** (e.g., action_mine_x, action_craft_x) in the `action`.
- Conditions should reflect **states that actually occurred** in the trajectory, so the rule can generalize and not be overly specific or invalid. The corresponding action should either approve or disapprove of the corresponding action in the trajectory. 
- Make sure the rules do not block **all** possible actions. The agent always needs at least one valid option.
- Make sure to state which timesteps your rule is based on.
- If necessary, propose **new observation variables** that could help express useful rules.



### Format:
- Output the LTL laws as a list of strings, each in proper syntax
- Then provide a brief explanation of each rule and how it prevents error or improves efficiency
- Output as many laws as you deem necessary, forcing efficient actions and disallowing inefficient ones

### Example LTL Law:
G(obs_has_raw_iron ^ obs_near_furnace => X(action_smelt_iron))
Explanation: Smelting iron early helps the agent craft a better pickaxe sooner.

### Inputs:

- Existing LTL laws 
    Rule 2: G(obs_has_2x_plank & !obs_has_2x_stick -> X(action_craft_stick))
    Number of actions allowed: 1
    Filtered actions:
    ['action_craft_stick']
    Observation Propositions (obs_props):
    obs_has_plank: True
    obs_has_2x_plank: True
    obs_has_3x_plank: True
    obs_has_wood_pickaxe: True
    obs_has_stone_pickaxe: True
    obs_has_fuel: True
    obs_near_crafting_table: True
    obs_iron_in_chunk: True
    obs_coal_in_chunk: True
    obs_wood_pickaxe_equipped: True
    ==================================================
    Rule 7: G(obs_wood_pickaxe_equipped & !obs_has_3x_cobble -> X(action_mine_stone))
    Number of actions allowed: 1
    Filtered actions:
    ['action_mine_stone']
    Observation Propositions (obs_props):
    obs_has_plank: True
    obs_has_2x_plank: True
    obs_has_3x_plank: True
    obs_has_wood_pickaxe: True
    obs_has_stone_pickaxe: True
    obs_has_fuel: True
    obs_near_crafting_table: True
    obs_iron_in_chunk: True
    obs_coal_in_chunk: True
    obs_wood_pickaxe_equipped: True

These rules are too constraining, modify them. 


- Feasible actions per step are available (so do not block everything)
- Action and observation variables:
    - Actions: {", ".join(ACTION_VARIABLES_LIST)}
    - Observations: {", ".join(OBS_VARIABLES_LIST)}


Answer with the following three things

1. Identify the action the agent should take given this state
2. Identify which rules are blocking that action from happening
3. Modify those rules.


Modify one of the rules, or delete one of them, so that the agent can take a feasible action at this timestep.

\end{lstlisting}

\subsection{Code and trajectories}
The code, trajectories, and the LTL laws generated for Behavior will be made available upon request.

\end{document}